\documentclass{article}

\usepackage[preprint]{neurips_2026}

\usepackage[utf8]{inputenc}
\usepackage[T1]{fontenc}
\usepackage{hyperref}
\usepackage{url}
\usepackage{microtype}
\usepackage{graphicx}
\usepackage{subcaption}
\usepackage{booktabs}
\usepackage{array}
\usepackage{tabularx}
\usepackage{amsfonts}
\usepackage{nicefrac}
\usepackage{xcolor}
\usepackage{bm}
\usepackage{algorithm, algpseudocode}

\newcolumntype{P}[1]{>{\raggedright\arraybackslash}p{#1}}
\newcolumntype{Y}{>{\raggedright\arraybackslash}X}

\usepackage{amsmath}
\usepackage{amssymb}
\usepackage{mathtools}
\usepackage{amsthm}

\usepackage[capitalize,noabbrev]{cleveref}

\theoremstyle{plain}
\newtheorem{theorem}{Theorem}[section]
\newtheorem{proposition}[theorem]{Proposition}

\theoremstyle{definition}

\theoremstyle{remark}

\usepackage[disable,textsize=tiny]{todonotes}

\title{Integrating Neural Encoders in Bayesian Generalized Linear Mixed Models for Multimodal Data}

\author{%
  \bfseries Yuankang Zhao$^{1}$, Youngsoo Baek$^{1}$, Felipe A. Medeiros$^{2}$, \\
  \bfseries Samuel Berchuck$^{1}$, Matthew M. Engelhard$^{1}$ \\[4pt]
  $^{1}$Department of Biostatistics \& Bioinformatics, Duke University \\
  $^{2}$Department of Ophthalmology, University of Miami \\
  \texttt{\{yuankang.zhao, m.engelhard\}@duke.edu}
}

\begin{document}

\maketitle

\begin{abstract}
Scalable Bayesian inference for generalized linear mixed models (GLMMs)
provides uncertainty-aware analysis of correlated longitudinal data, but
existing scalable approaches largely assume low-dimensional tabular predictors
and do not directly accommodate high-dimensional modalities such as images and text. We address this limitation by learning one or more
modality-specific neural encoders jointly with a GLMM objective, then
performing variance-corrected stochastic-gradient MCMC for the GLMM parameters
conditional on the learned representation. This conditional-Bayes design
combines supervised representation learning with posterior uncertainty
quantification for population-level effects, subject-specific heterogeneity,
and modality-level random slopes. The resulting model preserves interpretable
fixed and random effects for structured covariates and learned modalities while
scaling gracefully to large longitudinal datasets. In simulation studies, our method
recovers posterior means and variance estimates from full-data MCMC
benchmarks after covariance correction. We further evaluate uncertainty through
parameter-level interval coverage in simulations and predictive calibration on
held-out data. Applications to glaucoma progression and adolescent mental
health demonstrate that the framework 
allows nuanced assessment of the relative importance of each modality on both individual and population levels without
sacrificing predictive performance.
\end{abstract}
\section{Introduction}
Bayesian Generalized Linear Mixed Models (GLMMs) 
are commonly used to analyze clustered and longitudinal outcomes
\citep{breslow1993approximate}
in clinical medicine and other high-stakes settings requiring uncertainty quantification (UQ) for both model parameters and predictions \citep{lopez2025uncertainty}. 
However, traditional Bayesian GLMMs are designed for low-dimensional tabular data and cannot accommodate complex, multi-modal data (\textit{e.g.}, text, images) commonly encountered in healthcare 
\citep{lindstrom1990nonlinear}.  
Thus, there is a need for methodologies that integrate multi-modal observations while accounting for within-subject correlations \citep{acosta2022multimodal} 
and providing
principled uncertainty quantification. 
For example, in longitudinal glaucoma care, the information most predictive of future progression risk may differ substantially across patients. As illustrated in Figure~\ref{fig:glaucoma_motivation}, objective evidence from retina scans may dominate prediction for some patients, whereas patient-reported symptoms may be more informative for others, even when both patients are observed through the same set of modalities. A useful model should therefore not only combine heterogeneous inputs, including high-dimensional imaging and structured clinical covariates, but also quantify patient-specific variation in the importance of each modality. Because such individualized assessments may guide clinical interpretation, the model should further represent uncertainty in both modality-specific effects and predicted risk, rather than returning only point estimates.

\begin{figure}[t]
    \centering
    \includegraphics[width=\textwidth]{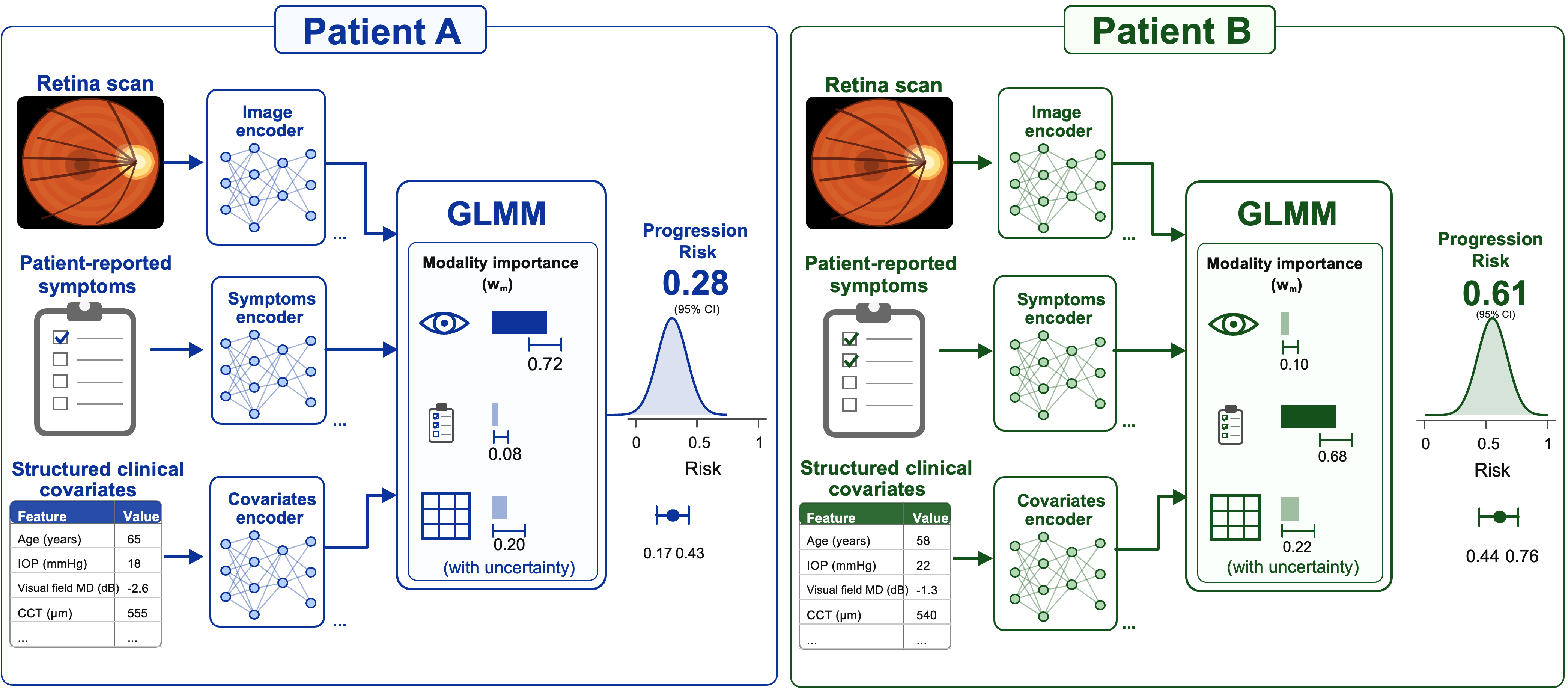}
    \caption{
    Conceptual illustration of patient-specific modality importance in longitudinal glaucoma modeling.
    Different modalities may dominate prediction for different patients: retina imaging is most informative for Patient A, whereas patient-reported symptoms are most informative for Patient B.
    The framework aims to estimate these individual-level modality contributions, together with uncertainty in both modality importance and predicted progression risk.
    }
    \label{fig:glaucoma_motivation}
\end{figure}

Addressing these needs presents two primary challenges. The first 
challenge is to integrate neural networks into the GLMM framework while retaining interpretability and accommodating heterogeneous covariates. The ideal model for longitudinal multimodal data would employ modality-specific encoders (e.g., CNNs for imaging, Transformers for text) alongside a standard linear predictor for demographics and other tabular features \citep{gao2024explainable}. This design
allows us to quantify the importance of each modality on both population and subject-specific levels, facilitating interpretation.
However, existing hybrid approaches often sacrifice this granularity, either by embedding neural networks solely as global fixed-effect predictors or by burying subject-specific effects within deep architectures, thereby obscuring the distinction between population-level trends and individual deviations \citep{simchoni2023integrating,shi2022generalized,mandel2023neural}.

The second challenge concerns the scalability of inference. Integrating neural networks into GLMMs results in a highly non-convex loss function and an intractable marginal likelihood, complicating parameter estimation. Existing strategies sacrifice rigorous UQ to obtain computational scalability. For example, point estimation methods (\textit{e.g.}, Maximum A Posteriori (MAP)), are computationally efficient but fail to capture epistemic uncertainty \citep{gal2016dropout}. Conversely, approximate Bayesian methods like variational inference often rely on restrictive distributional assumptions (e.g., Gaussianity)\citep{blundell2015weight} that may ill-fit the complex posterior distributions of deep neural networks. Developing a fully Bayesian approach that scales to big data, extracts features from complex modalities, and accurately quantifies uncertainty remains an open problem.

In this paper, we develop and rigorously analyze a method allowing scalable Bayesian inference in hierarchical settings where observations span multiple complex modalities.
Our specific contributions are threefold: (1) \textbf{Neural-encoder GLMM architecture:} We 
integrate modality-specific Neural Network encoders into the GLMM framework, allowing 
representations of high-dimensional modalities to be learned jointly with corresponding population and subject-specific effects that quantify their importance.
(2) \textbf{Scalable Bayesian Inference via SGLD:} 
We propose a Stochastic Gradient Langevin Dynamics (SGLD)-based learning procedure that supports scalable inference in high-dimensional settings. 
Our two-stage procedure first learns task-adapted representations jointly with the GLMM objective, then performs posterior sampling of GLMM parameters in the stabilized, learned feature space. (3) \textbf{Empirical Validation and Performance:} Through extensive simulation studies and real-world applications to glaucoma and mental health, we demonstrate that our method offers precise posterior approximations, allows us to interpret the importance of a given modality within and across subjects, and provides accurate UQ
all without sacrificing predictive performance.

\section{Related work}
\subsection{Neural networks in mixed models}
Recent work seeks to combine the predictive power of neural networks with the structured dependence modeling of GLMMs. One approach is to embed a neural network as a global, population-level predictor while retaining traditional random effects. For instance, \citet{shi2022generalized} propose GDMix, replacing the linear fixed-effects component with a nonlinear model (e.g., a DNN). For scalability, they rely on MAP estimation
, which does not yield posterior uncertainty; this can be a limitation when deploying the model in high-stakes domains such as clinical decision-making, where epistemic uncertainty is desirable.

Other approaches attempt to integrate random effects more deeply into the neural architecture. \citet{tschalzev2024enabling} introduce Monte Carlo Generalized Mixed Effects Neural Networks (MC-GMENN), which combine NN outputs and subject-specific effects within an activation function. While improving flexibility, this deep integration can diminish interpretability, as the clear separation between population trends and subject-specific deviations is often lost. Similarly, \citet{mandel2023neural} propose Neural Mixed Effects (NME) models, allowing subject-specific parameters to influence various layers of the neural architecture. Although this maximizes flexibility, it does not explicitly address the need for modality-specific encoding in multimodal EHR contexts, where preserving a standard linear component for structured covariates (e.g., age, sex) is often required for valid statistical inference.

\subsection{Scalable Bayesian inference for GLMMs}
The intractability of the marginal likelihood in GLMMs poses a significant hurdle for Bayesian inference. \citet{tran2020bayesian} applied the Fisher identity to GLMMs within a Variational Inference (VI) framework, estimating gradients via backpropagation. However, this approach typically relies on a Gaussian variational family, often under a mean-field assumption. Such assumptions can be too restrictive for complex posterior landscapes, leading to underestimated uncertainty and limited posterior expressiveness \citep{blei2017variational}.

This limitation becomes more pronounced when adding neural networks.
Practical approximate Bayesian neural network methods can fail to preserve core Bayesian updating properties, especially when crude approximations are applied to high-dimensional neural-weight posteriors \citep{pituk2025bayesian}. This motivates a more cautious division of labor in our model: rather than making strong Bayesian claims about the full neural network weight posterior, we use the neural network primarily as a supervised encoder and perform Bayesian uncertainty quantification in the lower-dimensional GLMM layer, whose population-level and subject-level parameters remain directly interpretable.

Stochastic Gradient MCMC (SG-MCMC) methods 
offer scalability without the bias introduced by fixed variational families. \citet{berchuck2026scalablebayesianinferencegeneralized} proposed a scalable Bayesian inference algorithm for standard GLMMs based on Stochastic Gradient Langevin Dynamics (SGLD) \citep{welling2011bayesian}. Their method combines a Fisher-identity-based Monte Carlo gradient estimator with an asymptotic variance correction, achieving accurate uncertainty quantification and substantial speedups over full-data MCMC. 
However, extending this to Neural GLMMs is nontrivial: the GLMM posterior is no longer defined over fixed low-dimensional covariates, but over representations learned by a neural encoder from high-dimensional and potentially multi-modal inputs.

Our work addresses this issue by separating representation learning from Bayesian posterior sampling. The neural network is first used to learn predictive representations, after which the encoder is fixed and SG-MCMC is applied to the GLMM layer. This design ensures that the posterior target remains fixed during sampling, which is necessary for SG-MCMC diagnostics and covariance correction to be meaningful. It also aligns with our broader modeling goal: Bayesian uncertainty quantification is concentrated in the lower-dimensional, interpretable GLMM layer rather than in the full neural-weight posterior. Recent work on stochastic mirror Langevin dynamics further highlights the difficulty of sampling variance components in large-data GLMMs \citep{baek2026safe}; such constrained samplers could be incorporated into our framework as a future replacement for the variance-component update.

\section{Method}

\subsection{Model architecture}
\noindent \textbf{Problem Formulation and Feature Extraction.} 
Let $\mathcal{D} = \{(\mathbf{u}_{ij}, \mathbf{x}_{ij}, y_{ij})\}$ denote the dataset, where $i = 1, \dots, N$ indexes the subjects and $j = 1, \dots, n_i$ indexes the observations for subject $i$. We distinguish between two types of inputs: $\mathbf{u}_{ij}$ represents high-dimensional unstructured data (e.g., images) requiring deep feature extraction, and $\mathbf{x}_{ij} \in \mathbb{R}^P$ represents standard structured covariates (e.g., clinical variables) modeled with fixed effects only. The outcome variable is denoted by $y_{ij}$.

To capture high-level representations from the unstructured inputs $\mathbf{u}_{ij}$, we employ a neural network encoder $h_{\bm{\phi}}(\cdot)$ parameterized by $\bm{\phi}$. For each input $\mathbf{u}_{ij}$, the encoder produces a $K$-dimensional latent feature vector. To ensure numerical stability and consistent scaling, we standardize as follows:
\begin{equation}
    \tilde{\mathbf{h}}_{ij} = h_{\bm{\phi}}(\mathbf{u}_{ij}), \quad z_{ij,k} = \frac{\tilde{h}_{ij,k} - \mu_{h,k}}{\sigma_{h,k}},
\end{equation}
where $k \in [K]$, $z_{ij,k}$ represents the $k$-th standardized feature component, and $\mu_{h,k}, \sigma_{h,k}$ are the batch-wise mean and standard deviation. We define the augmented latent feature vector as $\mathbf{z}_{ij} = [1, z_{ij,1}, \dots, z_{ij,K}]^\top$ to include an intercept term for the random effects modeling.

\noindent \textbf{Multimodal Generalized Linear Mixed Model Framework.}
We integrate the deep feature representations into a probabilistic GLMM framework to account for both population-level trends and subject-specific heterogeneity. Assuming the conditional distribution of the response $y_{ij}$ given the random effects belongs to the exponential family, the expected value $\mu_{ij} = \mathbb{E}[y_{ij} | \bm{\gamma}_i, \mathbf{u}_{ij}, \mathbf{x}_{ij}]$ is linked to a linear predictor $\eta_{ij}$ via a monotonic link function $g(\cdot)$, $g(\mu_{ij}) = \eta_{ij}$.
The linear predictor $\eta_{ij}$ is composed of a fixed-effects component for the standard covariates and a mixed-effects component for the latent neural features: 
\begin{equation}
    \eta_{ij} = \underbrace{\mathbf{x}_{ij}^\top \bm{\alpha}}_{\text{Fixed Effects Only}} + \underbrace{\mathbf{z}_{ij}^\top (\bm{\beta} + \bm{\gamma}_i)}_{\text{Deep Mixed Effects}}.
\end{equation}
Here, $\bm{\alpha} \in \mathbb{R}^P$ denotes the fixed effects coefficients for the standard covariates $\mathbf{x}_{ij}$. The vector $\bm{\beta} \in \mathbb{R}^{K+1}$ represents the fixed effects for the latent features (representing the population-average contribution of the deep features), while $\bm{\gamma}_i \sim \mathcal{N}(\mathbf{0}, \bm{\Sigma})$ denotes the subject-specific random effects. This formulation allows the model to learn subject-specific deviations in both the baseline risk (via the intercept in $\mathbf{z}_{ij}$) and the sensitivity to specific neural features.

\textit{Application to Binary Classification.} In this work, we focus on binary classification tasks where $y_{ij} \in \{0, 1\}$. Accordingly, we model the conditional probability using a Bernoulli distribution with the canonical Logit link function $g(\mu) = \log(\frac{\mu}{1-\mu})$. The probability $p_{ij} = P(y_{ij}=1 | \bm{\gamma}_i, \dots)$ is given by:
$    p_{ij} = \sigma(\eta_{ij}) = \frac{1}{1 + \exp(-\eta_{ij})}.$
The log-likelihood for subject $i$, conditioned on the latent features and random effects, is: $\log p(y_i | \mathbf{z}_i, \mathbf{x}_i, \bm{\gamma}_i) = \sum_{j=1}^{n_i} \Big[ y_{ij} \log(p_{ij}) + (1 - y_{ij}) \log(1 - p_{ij}) \Big].$

\subsection{Identifiability and Interpretable Estimands}
\label{subsec:identifiability}

Because the latent covariates are learned, their coordinate system is not
intrinsic. For clarity, write the latent part of the linear predictor as
$\eta^{(z)}_{ij}=\tilde{\mathbf z}_{ij}^{\top}(\bm\beta+\bm\gamma_i)$, where
$\tilde{\mathbf z}_{ij}=(1,z_{ij,1},\ldots,z_{ij,K})^{\top}$. For any
nonsingular diagonal matrix $A=\mathrm{diag}(1,a_1,\ldots,a_K)$, the
transformed representation and parameters
$\tilde{\mathbf z}'_{ij}=A\tilde{\mathbf z}_{ij}$,
$\bm\beta'=A^{-T}\bm\beta$, $\bm\gamma'_i=A^{-T}\bm\gamma_i$, and
$\bm\Sigma'=A^{-T}\bm\Sigma A^{-1}$ produce the same linear predictor and
marginal likelihood after integrating over the subject-specific effects.
Thus, without normalization, the scale and orientation of each learned latent
coordinate are not identifiable.

In our implementation, each scalar encoder output is centered and scaled to
unit empirical variance before posterior sampling. This fixes the shift and
positive scale conventions, but a sign ambiguity remains: replacing
$z_{ij,k}$ by $-z_{ij,k}$ and $(\beta_k,\gamma_{ik})$ by
$(-\beta_k,-\gamma_{ik})$ leaves the likelihood unchanged. Consequently, the
raw sign of $\beta_k$ is not an intrinsic scientific estimand. We therefore
focus on sign-invariant standardized summaries. Define the standardized
population-effect magnitude and standardized heterogeneity magnitude by
$\delta_k^{\mathrm{pop}}=|\beta_k|$ and
$\delta_k^{\mathrm{het}}=\sqrt{\Sigma_{kk}}$, and define their relative
contribution shares as
$\omega_k^{\mathrm{pop}}=\delta_k^{\mathrm{pop}}/\sum_{\ell=1}^{K}\delta_\ell^{\mathrm{pop}}$
and
$\omega_k^{\mathrm{het}}=\delta_k^{\mathrm{het}}/\sum_{\ell=1}^{K}\delta_\ell^{\mathrm{het}}$,
whenever the denominators are nonzero.

\begin{proposition}[Identifiable standardized latent summaries]
\label{prop:identifiable_estimands}
After centering and scaling the learned latent coordinates to unit empirical
variance, the summaries
$\delta_k^{\mathrm{pop}}$, $\delta_k^{\mathrm{het}}$,
$\omega_k^{\mathrm{pop}}$, and $\omega_k^{\mathrm{het}}$
are invariant to the remaining latent sign ambiguity. Consequently, posterior
comparisons such as
\[
    \Pr\!\left(
    \omega_g^{\mathrm{pop}}>\omega_h^{\mathrm{pop}}
    \mid \mathcal D
    \right)
    \quad\text{and}\quad
    \Pr\!\left(
    \omega_g^{\mathrm{het}}>\omega_h^{\mathrm{het}}
    \mid \mathcal D
    \right)
\]
are identifiable, whereas the raw sign of $\beta_k$ is not.
\end{proposition}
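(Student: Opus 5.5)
The plan is to formalize the residual ambiguity as a group action and show that each summary is a function on the orbits of that group. After centering and scaling each latent coordinate to unit empirical variance, the only diagonal reparametrizations $A=\mathrm{diag}(1,a_1,\ldots,a_K)$ that preserve the normalization are those with $a_k^2=1$. Indeed, a shift is ruled out by centering, and a rescaling $a_k$ changes the empirical variance to $a_k^2$. So the residual group is $S=\mathrm{diag}(1,s_1,\ldots,s_K)$ with $s_k\in\{\pm1\}$. I would first record this, and then check directly from the identity stated before the proposition that, since $S^{-T}=S$, the map
\[
(\tilde{\mathbf z}_{ij},\bm\beta,\bm\gamma_i,\bm\Sigma)\mapsto(S\tilde{\mathbf z}_{ij},S\bm\beta,S\bm\gamma_i,S\bm\Sigma S)
\]
leaves $\eta^{(z)}_{ij}$ unchanged. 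The Gaussian density of $\bm\gamma_i$ is also preserved, because $S\bm\gamma_i\sim\mathcal N(\mathbf 0,S\bm\Sigma S)$ and $|\det S|=1$. Hence the marginal likelihood is invariant.

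The second step verifies invariance of each summary. Under this map we have $\beta_k\mapsto s_k\beta_k$, so $\delta_k^{\mathrm{pop}}=|\beta_k|$ is unchanged. We also have $\Sigma_{kk}\mapsto s_k^2\Sigma_{kk}=\Sigma_{kk}$, so $\delta_k^{\mathrm{het}}$ is unchanged; only the off-diagonal entries change, by the factor $s_ks_\ell$. The shares $\omega_k^{\mathrm{pop}}$ and $\omega_k^{\mathrm{het}}$ are ratios of invariant quantities with invariant denominators, so they are invariant wherever they are defined. By contrast, $\beta_k$ itself is sent to $-\beta_k$ when $s_k=-1$. Two parameter values with opposite signs of $\beta_k$ therefore give identical likelihoods, which means the sign of $\beta_k$ is not identified.

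The third step passes to posteriors. I would assume, as the statement implicitly requires, that the prior is sign-invariant, meaning $\pi(S\theta)=\pi(\theta)$. This holds for independent symmetric priors on $\bm\beta$ and for an inverse-Wishart-type prior on $\bm\Sigma$ with a diagonal scale matrix. Under this assumption the posterior under representation $\tilde{\mathbf z}$ is the pushforward under $S$ of the posterior under $S\tilde{\mathbf z}$. Consequently, for any invariant functional $f$, the posterior law of $f(\theta)$ does not depend on which sign convention the encoder happened to produce. Applying this to the event indicator
\[
f=\mathbf 1\{\omega_g^{\mathrm{pop}}>\omega_h^{\mathrm{pop}}\},
\]
and likewise for the heterogeneity shares, gives the identifiability of the stated posterior probabilities. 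For the raw sign, the corresponding posterior probability $\Pr(\beta_k>0\mid\mathcal D)$ is mapped to $\Pr(\beta_k<0\mid\mathcal D)$ when the encoder's sign convention is flipped, so it is not a well-defined estimand.

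The main obstacle is being precise about what "identifiable" means in this setting, because the encoder is learned and the sign convention is an arbitrary outcome of training. I would define identifiability as invariance of the likelihood, and of the posterior law, under the residual group. I would also state the prior-symmetry assumption explicitly. Beyond that, I would note that the result is restricted to invariance under sign flips. It does not address other ambiguities, such as permutations of coordinates, which the shares handle only up to relabeling, or non-diagonal rotations, which are excluded by the coordinatewise normalization convention.
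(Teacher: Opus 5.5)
Your proposal is correct and follows essentially the same argument as the paper's proof: the likelihood is invariant under the sign-flip reparametrization, $|\beta_k|$ and $\Sigma_{kk}$ are invariant and hence so are the shares, and $\operatorname{sign}(\beta_k)$ is not invariant. Your explicit sign-symmetric-prior assumption is a genuine tightening, because the paper passes from likelihood invariance to posterior identifiability without mentioning the prior, and without that symmetry the posterior law of $|\beta_k|$ could depend on the encoder's arbitrary sign convention.

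One side remark is wrong. Coordinatewise centering and unit-variance scaling do not exclude non-diagonal reparametrizations: an orthogonal rotation of empirically uncorrelated standardized coordinates preserves the normalization while changing $|\beta_k|$ and $\Sigma_{kk}$. Mixing is ruled out only when each coordinate comes from its own modality-specific scalar encoder, as in the paper's applications. This does not affect your proof, because the proposition only concerns the sign ambiguity.
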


The proof is given in Appendix~\ref{app:identifiability}. This situation is
analogous to a standard regression setting in which predictors are observed
only up to an unknown scale factor: unstandardized coefficients cannot be
interpreted absolutely without fixing the predictor scale, so inference is
naturally based on standardized coefficients and relative effect sizes. In our
setting, the learned latent features are standardized before posterior
sampling, allowing us to compare standardized magnitudes across modalities.
However, because the sign of a learned latent coordinate remains arbitrary,
interpretation should focus on magnitude, relative importance, and posterior
rankings.

\subsection{Estimation}\label{subsec:estimation}

\noindent \textbf{Objective Function and Gradient Formulation.}
To perform Bayesian inference for the GLMM parameters $\Theta = \{\bm{\alpha},\bm{\beta}, \bm{\Sigma}\}$, where $\bm{\Sigma}$ represents the random effects covariance matrix, we aim to sample from the posterior distribution $p(\Theta | \mathcal{D})$. This is equivalent to minimizing the negative log-posterior potential function $f(\Theta)$, which decomposes into a prior component and a sum of subject-specific marginal likelihood contributions:$f(\Theta) = f_0(\Theta) + \sum_{i=1}^{N} f_i(\Theta)$,
where $f_0(\Theta) = -\log p(\Theta)$ represents the negative log-prior. The term $f_i(\Theta)$ denotes the negative marginal log-likelihood for subject $i$. In our deep probabilistic framework, the standardized latent features $\mathbf{z}_i$ extracted by the neural network are treated as covariates during the gradient derivation for the GLMM parameters. Thus, $f_i(\Theta)$ is defined as the integral over the random effects $\bm{\gamma}_i$:
$    f_i(\Theta) = -\log p(\mathbf{y}_i | \mathbf{z}_i, \Theta) = -\log \int p(\mathbf{y}_i | \mathbf{z}_i, \bm{\gamma}_i,\bm{\alpha}, \bm{\beta}) \, p(\bm{\gamma}_i | \bm{\Sigma}) \, d\bm{\gamma}_i.$
Directly optimizing or sampling based on $f_i(\Theta)$ is computationally intractable due to the high-dimensional integral. 

To overcome this, we follow \citep{berchuck2026scalablebayesianinferencegeneralized} and utilize Fisher's Identity, which expresses the gradient of the marginal log-likelihood as an expectation over the conditional posterior of the random effects:
$    g_i(\Theta) \triangleq \nabla f_i(\Theta) = \mathbb{E}_{p(\bm{\gamma}_i | y_i, \mathbf{z}_i, \Theta)} \left[ -\nabla_{\Theta} \log p(y_i, \bm{\gamma}_i | \mathbf{z}_i, \Theta) \right].$ Based on this identity, we construct an unbiased Monte Carlo estimator $\hat{g}_i(\Theta)$. We then draw samples $\{\bm{\gamma}_{ir}\}_{r=1}^R$ from the conditional posterior $p(\bm{\gamma}_i | y_i, \mathbf{z}_i, \Theta)$ via a Metropolis-Hastings (MH) step, yielding the estimator:
\begin{equation}
\label{eq:fisher_estimator}
    \hat{g}_i(\Theta) = -\frac{1}{R} \sum_{r=1}^{R} \nabla_{\Theta} \left( \log p(y_i | \mathbf{z}_i, \bm{\gamma}_{ir}, \bm{\alpha},\bm{\beta}) + \log p(\bm{\gamma}_{ir} | \bm{\Sigma}) \right).
\end{equation}

\noindent \textbf{Two-Stage Optimization Strategy.}
We separate supervised representation learning from Bayesian posterior
sampling. The encoder $\bm{\phi}$ is first trained together with the GLMM
likelihood to learn a predictive latent representation. Posterior samples are
then collected only after this representation is fixed, so that SGLD targets a
stationary conditional posterior for the GLMM parameters rather than a moving
target induced by continuously changing features. The full computational
procedure is summarized in Algorithm~\ref{alg:two_stage_sgld} in
Appendix~\ref{app:two_stage_algorithm}.

\textit{Phase 1: Supervised representation learning.}
During the first $K_{\mathrm{burn}}$ epochs, we update the encoder parameters
$\bm{\phi}$ using stochastic gradient methods, such as Adam, while updating
$\Theta$ in parallel with SGLD-type stochastic-gradient steps. This stage learns
a stable supervised feature map $h_{\bm{\phi}}(\cdot)$; samples generated during
this stage are treated as burn-in and are not used for posterior summaries.

\textit{Phase 2: Frozen conditional posterior sampling.}
Let $\hat{\bm{\phi}}$ denote the encoder obtained after Phase 1, and let
$\mathbf{z}_{\hat{\bm{\phi}},ij}
= \mathrm{Std}\{h_{\hat{\bm{\phi}}}(\mathbf{u}_{ij})\}$
be the frozen and standardized latent representation for observation $j$ from
subject $i$. After freezing $\hat{\bm{\phi}}$ and locking the normalization
statistics, the latent representation is treated as a fixed covariate in the
GLMM. Equivalently, SGLD targets the conditional posterior
$\pi_{\hat{\bm{\phi}}}(\Theta \mid \mathcal D)
\propto p(\Theta)\prod_{i=1}^N m_i(\Theta;\hat{\bm{\phi}})$,
where
$m_i(\Theta;\hat{\bm{\phi}})
=
\int
\{\prod_{j=1}^{n_i}
p(y_{ij}\mid \mathbf{x}_{ij},
\mathbf{z}_{\hat{\bm{\phi}},ij},
\gamma_i,\Theta)\}
p(\gamma_i\mid\Sigma)\,d\gamma_i$
is the subject-specific marginal likelihood after integrating out the random
effect $\gamma_i$. We then apply SGLD only to $\Theta$ and collect samples from
this fixed conditional posterior. Thus, uncertainty quantification is Bayesian
for the GLMM parameters conditional on the learned encoder
$\hat{\bm{\phi}}$; we do not claim to propagate posterior uncertainty over all
neural-network weights.

This two-stage procedure can therefore be viewed as conditional Bayesian
inference after joint supervised representation learning. Theoretical work
supports the use of trained feature maps in this role: mean-field analyses show
that SGD-trained neural networks can be described by nonlinear feature-learning
dynamics and can achieve near-ideal generalization in representative settings
\citep{mei2018meanfield}; related works further distinguish the
feature-learning/mean-field regime from linearized or fixed-feature regimes
\citep[Ch.~20]{spiliopoulos2025mathdl}. Conditioning on a jointly learned,
high-quality representation for downstream Bayesian inference is therefore a
principled practical compromise, analogous in spirit to empirical Bayes
procedures in which nuisance or hyperparameters are estimated first and the
parameters of interest are then given Bayesian treatment
\citep{robbins1956empirical, efron2010large}. In our setting,
Appendix~\ref{app:conditional_stability} provides a local perturbation argument
showing that, once the learned representation is stable, the resulting
conditional posterior mode and covariance are stable to small representation
errors. As detailed in Appendix~\ref{app:hyperparameter-tuning}, hyperparameter tuning
in the simulation and real-data studies also considered non-iterative training
variants and stage-1 learning-rate decay schedules (\texttt{lr\_decay}). Neither
alternative showed systematic empirical gains over the proposed two-stage
procedure, supporting the robustness of separating representation learning from
conditional posterior sampling in a stabilized feature space.

\noindent \textbf{SGLD Update and Parameter Evolution.}
The global parameters $\Theta$ are updated using SGLD. At iteration $t$ with a mini-batch $\mathcal{B}_t$ of size $S$, the update rule incorporates the gradient of the prior and the scaled sum of the stochastic gradients from the likelihood:
\begin{equation}
    \Theta_{t+1} = \Theta_t - \epsilon_t \left( \nabla f_0(\Theta_t) + \frac{N}{S} \sum_{i \in \mathcal{B}_t} \hat{g}_i(\Theta_t) \right) + \sqrt{2\epsilon_t} \bm{\eta}_t,
\end{equation}
where $\epsilon_t$ is the step size, $N$ is the total subject size, and $\bm{\eta}_t \sim \mathcal{N}(0, \mathbf{I})$ injects Gaussian noise.

\noindent \textbf{Post-hoc Variance Correction.}
Fixed-step SGLD produces samples with inflated variance because of minibatch
subsampling, Monte Carlo gradient estimation, and injected Langevin noise
\citep{berchuck2026scalablebayesianinferencegeneralized}. To obtain calibrated
uncertainty estimates, we apply a Lyapunov-equation-based post-hoc correction.
In brief, the correction estimates the covariance of the injected stochastic
noise, solves for the posterior precision matrix that would induce the observed
SGLD covariance, and linearly rescales the unconstrained samples. The full
correction equations and sample-rescaling map are given in
Appendix~\ref{app:variance_correction}.





\section{Simulation study}
Here we evaluate the proposed framework through numerical simulations. Our goals are to (1) assess whether the neural encoder can recover complex nonlinear latent structures, and (2) validate posterior inference and variance correction when LMMs or GLMMs include neural-network-learned latent features.

\subsection{LMM setting with random intercept} 

We first consider a semiparametric linear mixed model (LMM) with a random intercept and a neural latent feature:
\begin{equation}
\begin{split}
    y_{ij} = \beta_0 + \beta_1 z(\mathbf{x}_{ij}) + \beta_2 x_{2,ij} + \gamma_i + \epsilon_{ij},\\ \quad i=1,\dots,N, \; j=1,\dots,n_i
\end{split}
\end{equation}
The simulation includes three nonlinear latent structures: a complex one-dimensional function, a paraboloid, and a saddle surface. Full data-generating mechanisms, implementation details, and benchmark construction are provided in Appendix~\ref{app:simulation_lmm_details}.

The top panel of Figure~\ref{fig:simulation_results} visualizes the learned functions. It is important to note that the latent function in such semiparametric models is subject to \textit{scale and shift non-identifiability} (i.e., $h \approx a z + b$). Therefore, we compare the standardized vectors of the true $z$ and the learned $h$. In the 1D case (Fig.~\ref{fig:simulation_results}a), the learned curve almost perfectly overlaps with the true standardized function (correlation 0.9989). For the 2D cases (Fig.~\ref{fig:simulation_results}b-e), the model effectively captures the geometry of both shapes, with high correlations on standardized outputs (0.9986 and 0.9971), confirming that the network has identified the underlying mechanisms despite the complex likelihood function.

Posterior means and corrected posterior variances also agree with the Gibbs benchmark; the full LMM posterior validation plot is shown in Figure~\ref{fig:LMM_posterior_comparison} in Appendix~\ref{app:simulation_lmm_posterior}.

\subsection{GLMM setting with high-dimensional features and random slopes}
We next consider a high-dimensional logistic GLMM with both random intercepts and random slopes. The binary response follows a Bernoulli distribution with logits
\begin{equation}
    \operatorname{logit}(p_{ij}) =
    (\beta_0 + \gamma_{0i})
    + (\beta_1 + \gamma_{1i}) z(\mathbf{x}_{ij})
    + \beta_2 x_{2,ij}.
\end{equation}
Here $\mathbf{x}_{ij}\in\mathbb{R}^{20}$, and only 3 of the 20 input dimensions enter the true latent feature, with the remaining 17 dimensions serving as noise features. This makes the setting a joint test of nonlinear representation learning, implicit feature selection, and random-slope variance recovery. Full data-generating and implementation details are provided in Appendix~\ref{app:simulation_glmm_details}.

\begin{figure}[t]
    \centering
    \includegraphics[width=0.85\linewidth]{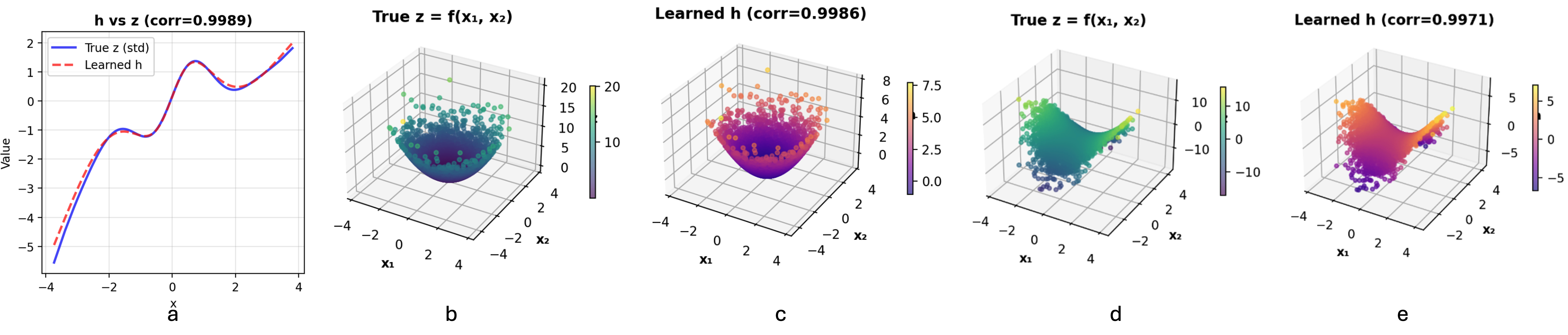}\\[2pt]
    \includegraphics[width=0.85\linewidth]{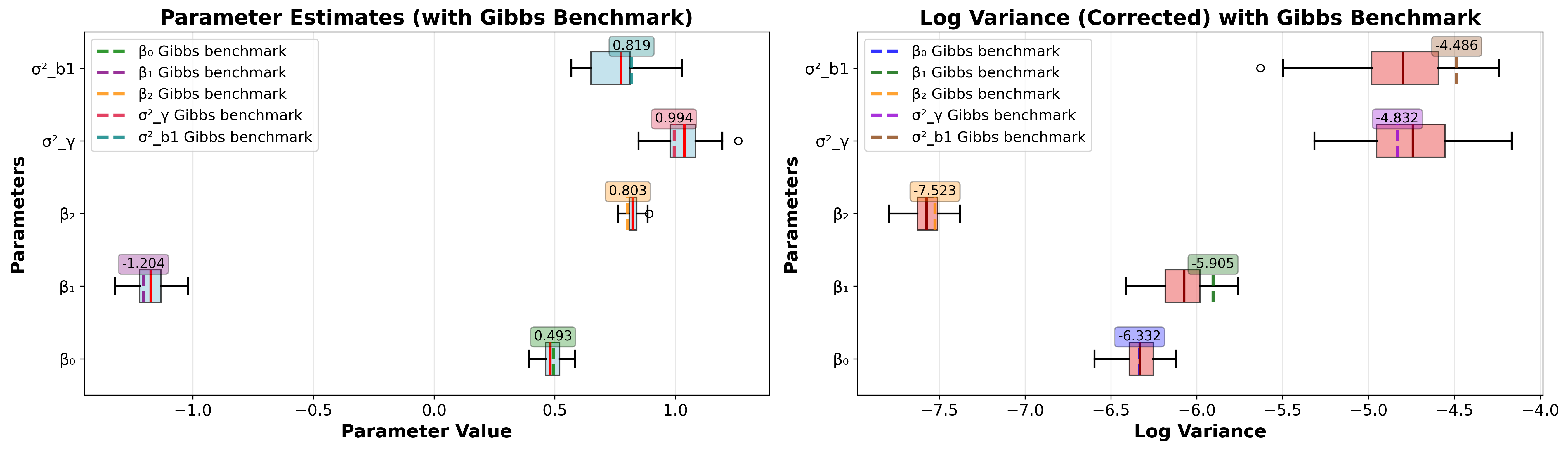}
    \caption{\textbf{Simulation results.} Top: latent structure recovery for the LMM setting, with subpanel (a) comparing the true and learned 1D nonlinear functions and subpanels (b-e) showing ground-truth and learned latent manifolds. Bottom: posterior inference in the high-dimensional GLMM setting, comparing parameter estimates (left) and corrected posterior log-variances (right) against the MCMC benchmark.}
    \label{fig:simulation_results}
\end{figure}
\paragraph{Results and analysis}
The bottom panel of Figure~\ref{fig:simulation_results} presents the posterior inference results. The left side demonstrates that our method accurately recovers the posterior means for all fixed effects and variance components, with boxplot medians closely matching the benchmark. Importantly, the right side validates our variance correction in this complex setting. Despite the high-dimensional input and additional noise introduced by the random slope, the corrected log-variances show that benchmark posterior variances fall within the range of our estimates.

The model retains the inherent sign and scale non-identifiability described in Section~\ref{subsec:identifiability}; therefore, the results presented here are evaluated after standardizing the learned latent representation to align with the ground truth scale, confirming that the structural relationships are correctly captured.

\section{Real data experiments}

We evaluated the framework in two longitudinal real-data settings: glaucoma progression from visual field images and adolescent mental health risk from multi-domain ABCD Study predictors.

\subsection{Glaucoma longitudinal image data analysis}

To demonstrate the practical utility of our framework in medical decision-making, we applied our method to forecast glaucoma progression using longitudinal visual field (VF) data. Early identification of eyes at high risk of rapid functional loss is critical for preserving vision \citep{berchuck2019estimating}.

We analyzed longitudinal standard automated perimetry (SAP) data from a glaucoma cohort. Full cohort construction, visual-field preprocessing, outcome definition, quality control, and split details are provided in Appendix~\ref{app:glaucoma_data}.

\paragraph{Model framework and implementation}
We adapted our Multimodal GLMM framework to integrate a Convolutional Neural Network (CNN) encoder with a logistic mixed-effects component. The CNN extracts a scalar latent feature $z_{ij}$ from the input image, and patient age is included as an explicit covariate. We model the progression probability $p_{ij}$ as:
$    \text{logit}(p_{ij}) = (\beta_0 + \gamma_{0i}) + (\beta_1 + \gamma_{1i}) z_{ij} + \beta_{\text{age}} \cdot \text{Age}_{ij}$
where $\text{Age}_{ij}$ is standardized. Random effects $\gamma_{0i} \sim \mathcal{N}(0, \sigma_{\gamma}^2)$ and $\gamma_{1i} \sim \mathcal{N}(0, \sigma_{b_1}^2)$ capture subject-specific baseline risk and sensitivity to visual field patterns, respectively. We estimated the posterior of $\Theta = \{\beta_0, \beta_1, \beta_{\text{age}}, \log \sigma_{\gamma}^2, \log \sigma_{b_1}^2\}$ via two-stage SGLD with variance correction; hyperparameter tuning is described in Appendix~\ref{app:hyperparameter-tuning}.
\paragraph{Results and interpretation}

\textbf{Population-level Effects ($\bm{\beta}$).} Posterior analysis confirms older age as a significant risk factor ($\hat{\beta}_{\text{age}} \approx 1.68$, 95\% CI: $[1.45, 1.95]$) \citep{berchuck2019estimating}.
The coefficient for the CNN-extracted latent feature is $\hat{\beta}_1 \approx -4.66$ (95\% CI: $[-5.10, -4.20]$). Since the outcome represents deterioration, this strong negative magnitude implies that a 1-SD increase in the learned latent feature $z$ corresponds to a substantial reduction in log-odds of progression. Our CNN learned a ``health index'' from raw visual fields, where higher values guard against functional loss.

\textbf{Subject-specific Heterogeneity ($\bm{\sigma}^2$).} The full posterior distributions, including the variance components, are shown in Figure~\ref{fig:glaucoma_all_posteriors} in Appendix~\ref{app:glaucoma_all_posteriors}. The large random intercept variance ($\sigma_{\gamma}^2 \approx 38$) reveals substantial baseline heterogeneity across patients (e.g., ``rapid progressors'' versus ``stable'' eyes) that is unexplained by age or image features alone. Additionally, the random slope variance ($\sigma_{b_1}^2 \approx 0.4$) indicates that the predictive value of visual field patterns varies by individual.

Finally, we compared our method against a baseline deterministic CNN trained without random effects. On the held-out test set, our Multimodal GLMM achieved an AUC of $0.764$, comparable to the baseline model (AUC: $0.761$). These results show incorporating rigorous Bayesian UQ and random effects enhances interpretability and explicitly quantifies individual deviations without compromising predictive accuracy. 

\subsection{Adolescent mental health risk analysis (ABCD Study)}
We then studied prediction of future mental health distress in the Adolescent Brain Cognitive Development (ABCD) Study. This challenges the model to integrate multi-domain developmental data while accounting for heterogeneity in how risk factors influence mental health trajectories.

\paragraph{Dataset and predictor domains}
We analyzed longitudinal observations from the ABCD Study \citep{karcher2021abcd,garavan2018recruiting}, using next-visit high-risk p-factor status as the binary outcome \citep{hill2025prediction}. Cohort construction, leakage control, preprocessing, and split details are provided in Appendix~\ref{app:abcd_grouping}. Predictors were organized according to clinical modifiability into eight neural-encoder domains: \textit{Screen Time \& Technology Use}, \textit{Sleep}, \textit{Family Environment}, \textit{Social \& School Environment}, \textit{Neighborhood \& Socioeconomic Context}, \textit{Neurobiology}, \textit{Family Psychiatric History}, and \textit{Fixed Biological/Demographic Characteristics}. Additional grouping rules, including the treatment of \texttt{total\_core} and excluded auxiliary variables, are also provided in Appendix~\ref{app:abcd_grouping}.

\paragraph{Model framework and implementation}
To disentangle domain-specific contributions, we employed eight parallel Multi-Layer Perceptron (MLP) encoders. Let $\mathcal{G}_{\mathrm{NN}}$ denote the eight neural-encoder domains listed above. For each domain $g \in \mathcal{G}_{\mathrm{NN}}$, an encoder $f_g(\cdot)$ maps the corresponding predictor block $X_{g,ij}$ to a scalar latent feature,
$z_{g,ij} = (f_g(X_{g,ij}) - \bar f_g)/s_g$, 
where $\bar f_g$ and $s_g$ are the training-set mean and standard deviation of the encoder output. This standardization makes the population-level coefficients comparable across domains. Let $c_{ij}$ denote the standardized \texttt{total\_core} trauma/adversity summary. The linear predictor is formulated as
\begin{equation}
    \operatorname{logit}(p_{ij}) =
    (\beta_0 + \gamma_{0i})
    + \beta_c c_{ij}
    + \sum_{g \in \mathcal{G}_{\mathrm{NN}}}
    (\beta_g + \gamma_{g,i}) z_{g,ij}.
    \label{eq:abcd_glmm}
\end{equation}
Here, $\beta_g$ represents the population-level association of domain $g$ with next-visit high-risk status, while $\gamma_{g,i} \sim \mathcal{N}(0,\sigma_g^2)$ captures subject-specific deviation in sensitivity to that domain. The random intercept $\gamma_{0i} \sim \mathcal{N}(0,\sigma_0^2)$ captures residual baseline heterogeneity across subjects. This distinguishes domains that are predictive on average from those whose effects vary between individuals.

\paragraph{Results and interpretation}
We focus the main interpretation on two quantities: the standardized population-level slopes $\beta_g$, which measure average domain-level associations, and the random-slope variances $\sigma_g^2$, which measure between-subject heterogeneity in domain-specific effects.

First, \textit{Sleep} showed the strongest population-level association with next-visit high-risk status ($\hat{\beta}_{\mathrm{Sleep}} \approx 0.95$, 95\% CI: $[0.91, 0.99]$). This agrees with the ABCD-based analysis of \citet{hill2025prediction}, which identified sleep disturbances as the most influential predictor of future high-risk psychopathology status, and with meta-analytic evidence linking shorter sleep duration to worse adolescent mood outcomes \citep{short2020sleep}. Recovering this known dominant signal supports the face validity of the proposed framework: even with learned domain-specific encoders and Bayesian mixed-effects inference, the model identifies sleep as the leading population-level risk domain.

Second, \textit{Neighborhood \& Socioeconomic Context} demonstrates why subject-specific effects matter. Its population-level association was relatively small ($\hat{\beta}_{\mathrm{NSES}} \approx -0.14$, 95\% CI: $[-0.18, -0.10]$), but its random-slope variance was the single largest variance component in the model ($\hat{\sigma}^2_{\mathrm{NSES}} \approx 0.06$, 95\% CI: $[0.05, 0.07]$), exceeding even the random intercept by posterior mean. Thus, a weak average effect may mask strong effects in particular subpopulations. This interpretation aligns with Moving to Opportunity evidence that neighborhood changes produced subgroup-dependent youth outcomes, with benefits for female youth offset by adverse effects for male youth \citep{kling2007experimental}. Our model therefore surfaces contextual effects that a single population-level coefficient could obscure. Additional domain-level rankings and interpretation are provided in Appendix~\ref{app:abcd_additional_results}. As with other learned coefficients, the sign of $\hat{\beta}_{\mathrm{NSES}}$ reflects the direction of the learned latent score rather than a direct monotone effect of neighborhood or socioeconomic variables.

To validate predictive integrity, we compared the Multi-Group Deep GLMM against a baseline MLP trained without random effects. On the held-out test set, our probabilistic model achieved an AUC of $0.78$ and Accuracy of $0.78$, performing comparably to the baseline model (AUC: $0.78$; Accuracy: $0.79$). This confirms that our framework successfully incorporates complex random effect structures to reveal subject-specific etiologies without sacrificing the high predictive performance characteristic of deep learning.

\section{Conclusion and discussion}

In this work, we presented a Multimodal Generalized Linear Mixed Model (GLMM) framework that enables scalable Bayesian inference for high-dimensional, longitudinal data. By integrating modality-specific neural network encoders with stochastic gradient MCMC (SGLD) inference, our approach overcomes computational bottlenecks of traditional Bayesian methods, yet preserves rigorous uncertainty quantification required for high-stakes applications. Through extensive simulations, we demonstrated that our method, augmented by a post-hoc variance correction, accurately recovers posterior distributions and avoids the variance underestimation typical of approximate inference.

We then applied our framework to glaucoma progression and adolescent mental health, yielding interpretable insights into both population-level trends and subject-specific heterogeneity. Our architecture deploys multiple neural encoders to extract features from distinct data domains/modalities and quantify their relative importance. When comparing child behavioral history versus neuroimaging, for example, we found that subject-specific risks are driven by differential sensitivities to specific environmental or behavioral factors. By retaining a mixed model structure, our method offers clinicians a transparent view of how individual patient trajectories deviate from population norms.

\paragraph{Limitations and future work}
Our framework has limitations that suggest future research directions. First, as noted in the simulation study, semiparametric models involving latent neural features are subject to sign and scale non-identifiability between the linear coefficients and the encoder outputs. This is an inherent limitation that we addressed by standardizing our neural network outputs, which allows structural relationships and relative effects to be identified.

Second, while our method quantifies the overall importance of each modality, 
it does not attribute this importance to specific raw input features. In glaucoma, for instance, we know visual fields are predictive but not which specific spatial defects are responsible. However, this is easily overcome by coupling our framework with established explainability methods, such as saliency maps \citep{simonyan2013deep} or concept bottleneck models \citep{koh2020concept}, which we will explore in future work.

\bibliographystyle{plainnat}
\bibliography{example_paper}

@article{mandel2023neural,
  title={Neural networks for clustered and longitudinal data using mixed effects models},
  author={Mandel, Francesca and Ghosh, Riddhi Pratim and Barnett, Ian},
  journal={Biometrics},
  volume={79},
  number={2},
  pages={711--721},
  year={2023},
  publisher={Wiley Online Library}
}

@article{simchoni2023integrating,
  title={Integrating random effects in deep neural networks},
  author={Simchoni, Giora and Rosset, Saharon},
  journal={Journal of Machine Learning Research},
  volume={24},
  number={156},
  pages={1--57},
  year={2023}
}

@article{tschalzev2024enabling,
  title={Enabling mixed effects neural networks for diverse, clustered data using Monte Carlo methods},
  author={Tschalzev, Andrej and Nitschke, Paul and Kirchdorfer, Lukas and L{\"u}dtke, Stefan and Bartelt, Christian and Stuckenschmidt, Heiner},
  journal={arXiv preprint arXiv:2407.01115},
  year={2024}
}

@inproceedings{shi2022generalized,
  title={Generalized deep mixed models},
  author={Shi, Jun and Jiang, Chengming and Gupta, Aman and Zhou, Mingzhou and Ouyang, Yunbo and Xiao, Qiang Charles and Song, Qingquan and Wu, Yi and Wei, Haichao and Gao, Huiji},
  booktitle={Proceedings of the 28th ACM SIGKDD Conference on Knowledge Discovery and Data Mining},
  pages={3869--3877},
  year={2022}
}

@inproceedings{welling2011bayesian,
  title={Bayesian learning via stochastic gradient Langevin dynamics},
  author={Welling, Max and Teh, Yee W},
  booktitle={Proceedings of the 28th international conference on machine learning (ICML-11)},
  pages={681--688},
  year={2011}
}

@article{tran2020bayesian,
  title={Bayesian deep net GLM and GLMM},
  author={Tran, M-N and Nguyen, Nghia and Nott, David and Kohn, Robert},
  journal={Journal of Computational and Graphical Statistics},
  volume={29},
  number={1},
  pages={97--113},
  year={2020},
  publisher={Taylor \& Francis}
}

@article{breslow1993approximate,
  title={Approximate inference in generalized linear mixed models},
  author={Breslow, Norman E and Clayton, David G},
  journal={Journal of the American statistical Association},
  volume={88},
  number={421},
  pages={9--25},
  year={1993},
  publisher={Taylor \& Francis}
}

@article{lopez2025uncertainty,
  title={Uncertainty Quantification for Machine Learning in Healthcare: A Survey},
  author={L{\'o}pez, L and Elsharief, Shaza and Jorf, Dhiyaa Al and Darwish, Firas and Ma, Congbo and Shamout, Farah E},
  journal={arXiv preprint arXiv:2505.02874},
  year={2025}
}

@article{lindstrom1990nonlinear,
  title={Nonlinear mixed effects models for repeated measures data},
  author={Lindstrom, Mary J and Bates, Douglas M},
  journal={Biometrics},
  pages={673--687},
  year={1990},
  publisher={JSTOR}
}

@article{gao2024explainable,
  title={An explainable longitudinal multi-modal fusion model for predicting neoadjuvant therapy response in women with breast cancer},
  author={Gao, Yuan and Ventura-Diaz, Sofia and Wang, Xin and He, Muzhen and Xu, Zeyan and Weir, Arlene and Zhou, Hong-Yu and Zhang, Tianyu and van Duijnhoven, Frederieke H and Han, Luyi and others},
  journal={Nature Communications},
  volume={15},
  number={1},
  pages={9613},
  year={2024},
  publisher={Nature Publishing Group UK London}
}

@inproceedings{gal2016dropout,
  title={Dropout as a bayesian approximation: Representing model uncertainty in deep learning},
  author={Gal, Yarin and Ghahramani, Zoubin},
  booktitle={international conference on machine learning},
  pages={1050--1059},
  year={2016},
  organization={PMLR}
}

@inproceedings{blundell2015weight,
  title={Weight uncertainty in neural network},
  author={Blundell, Charles and Cornebise, Julien and Kavukcuoglu, Koray and Wierstra, Daan},
  booktitle={International conference on machine learning},
  pages={1613--1622},
  year={2015},
  organization={PMLR}
}

@article{blei2017variational,
  title={Variational inference: A review for statisticians},
  author={Blei, David M and Kucukelbir, Alp and McAuliffe, Jon D},
  journal={Journal of the American statistical Association},
  volume={112},
  number={518},
  pages={859--877},
  year={2017},
  publisher={Taylor \& Francis}
}

@article{acosta2022multimodal,
  title={Multimodal biomedical AI},
  author={Acosta, Juli{\'a}n N and Falcone, Guido J and Rajpurkar, Pranav and Topol, Eric J},
  journal={Nature medicine},
  volume={28},
  number={9},
  pages={1773--1784},
  year={2022},
  publisher={Nature Publishing Group US New York}
}

@article{berchuck2019estimating,
  title={Estimating rates of progression and predicting future visual fields in glaucoma using a deep variational autoencoder},
  author={Berchuck, Samuel I and Mukherjee, Sayan and Medeiros, Felipe A},
  journal={Scientific Reports},
  volume={9},
  number={1},
  pages={18113},
  year={2019},
  publisher={Nature Publishing Group UK London}
}

@article{hill2025prediction,
  title={Prediction of mental health risk in adolescents},
  author={Hill, Elliot D and Kashyap, Pratik and Raffanello, Elizabeth and Wang, Yun and Moffitt, Terrie E and Caspi, Avshalom and Engelhard, Matthew and Posner, Jonathan},
  journal={Nature medicine},
  pages={1--7},
  year={2025},
  publisher={Nature Publishing Group US New York}
}

@article{garavan2018recruiting,
  title={Recruiting the ABCD sample: Design considerations and procedures},
  author={Garavan, Hugh and Bartsch, Hauke and Conway, K and Decastro, A and Goldstein, RZ and Heeringa, S and Jernigan, T and Potter, A and Thompson, W and Zahs, D},
  journal={Developmental cognitive neuroscience},
  volume={32},
  pages={16--22},
  year={2018},
  publisher={Elsevier}
}

@article{karcher2021abcd,
  title={The ABCD study: understanding the development of risk for mental and physical health outcomes},
  author={Karcher, Nicole R and Barch, Deanna M},
  journal={Neuropsychopharmacology},
  volume={46},
  number={1},
  pages={131--142},
  year={2021},
  publisher={Springer International Publishing Cham}
}

@article{simonyan2013deep,
  title={Deep inside convolutional networks: Visualising image classification models and saliency maps},
  author={Simonyan, Karen and Vedaldi, Andrea and Zisserman, Andrew},
  journal={arXiv preprint arXiv:1312.6034},
  year={2013}
}

@inproceedings{koh2020concept,
  title={Concept bottleneck models},
  author={Koh, Pang Wei and Nguyen, Thao and Tang, Yew Siang and Mussmann, Stephen and Pierson, Emma and Kim, Been and Liang, Percy},
  booktitle={International conference on machine learning},
  pages={5338--5348},
  year={2020},
  organization={PMLR}
}

@misc{berchuck2026scalablebayesianinferencegeneralized,
      title={Scalable Bayesian Inference for Generalized Linear Mixed Models via Stochastic Gradient MCMC}, 
      author={Samuel I. Berchuck and Youngsoo Baek and Felipe A. Medeiros and Andrea Agazzi},
      year={2026},
      eprint={2403.03007},
      archivePrefix={arXiv},
      primaryClass={stat.CO},
      url={https://arxiv.org/abs/2403.03007}, 
}

@article{short2020sleep,
  title={The relationship between sleep duration and mood in adolescents: A systematic review and meta-analysis},
  author={Short, Michelle A. and Booth, Stephen A. and Omar, Omar and Ostlundh, Linda and Arora, Teresa},
  journal={Sleep Medicine Reviews},
  volume={52},
  pages={101311},
  year={2020},
  doi={10.1016/j.smrv.2020.101311}
}

@article{yap2014parental,
  title={Parental factors associated with depression and anxiety in young people: A systematic review and meta-analysis},
  author={Yap, Marie Bee Hui and Pilkington, Pamela Doreen and Ryan, Siobhan Mary and Jorm, Anthony F.},
  journal={Journal of Affective Disorders},
  volume={156},
  pages={8--23},
  year={2014},
  doi={10.1016/j.jad.2013.11.007}
}

@article{kidger2012school,
  title={The effect of the school environment on the emotional health of adolescents: A systematic review},
  author={Kidger, Judi and Araya, Ricardo and Donovan, Jenny and Gunnell, David},
  journal={Pediatrics},
  volume={129},
  number={5},
  pages={925--949},
  year={2012},
  doi={10.1542/peds.2011-2248}
}

@article{reiss2013socioeconomic,
  title={Socioeconomic inequalities and mental health problems in children and adolescents: A systematic review},
  author={Reiss, Franziska},
  journal={Social Science \& Medicine},
  volume={90},
  pages={24--31},
  year={2013},
  doi={10.1016/j.socscimed.2013.04.026}
}

@article{kling2007experimental,
  title={Experimental analysis of neighborhood effects},
  author={Kling, Jeffrey R. and Liebman, Jeffrey B. and Katz, Lawrence F.},
  journal={Econometrica},
  volume={75},
  number={1},
  pages={83--119},
  year={2007},
  doi={10.1111/j.1468-0262.2007.00733.x}
}

@article{baek2026safe,
  title={Safe, Scalable, and Accurate Bayes Posterior Sampling for Large-Data Generalized Linear Mixed Models},
  author={Baek, Youngsoo and Berchuck, Samuel I},
  journal={arXiv preprint arXiv:2604.26029},
  year={2026}
}

@inproceedings{pituk2025bayesian,
  title={Do Bayesian neural networks actually behave like Bayesian models?},
  author={Pituk, G{\'a}bor and Shirvaikar, Vik and Rainforth, Tom},
  booktitle={Forty-second International Conference on Machine Learning},
  year={2025}
}

@article{mei2018meanfield,
  title   = {A mean field view of the landscape of two-layer neural networks},
  author  = {Mei, Song and Montanari, Andrea and Nguyen, Phan-Minh},
  journal = {Proceedings of the National Academy of Sciences},
  volume  = {115},
  number  = {33},
  pages   = {E7665--E7671},
  year    = {2018},
  doi     = {10.1073/pnas.1806579115}
}

@book{spiliopoulos2025mathdl,
  title     = {Mathematical Foundations of Deep Learning Models and Algorithms},
  author    = {Spiliopoulos, Konstantinos and Sowers, Richard B. and Sirignano, Justin},
  series    = {Graduate Studies in Mathematics},
  volume    = {252},
  publisher = {American Mathematical Society},
  year      = {2025},
  note      = {Chapter 20: Optimization in the Feature Learning Regime: Mean Field Scaling}
}

@incollection{robbins1956empirical,
  author    = {Robbins, Herbert E.},
  title     = {An Empirical Bayes Approach to Statistics},
  booktitle = {Proceedings of the Third Berkeley Symposium on Mathematical Statistics and Probability, Volume 1: Contributions to the Theory of Statistics},
  editor    = {Neyman, Jerzy},
  pages     = {157--164},
  publisher = {University of California Press},
  address   = {Berkeley, CA},
  year      = {1956}
}

@book{efron2010large,
  author    = {Efron, Bradley},
  title     = {Large-Scale Inference: Empirical Bayes Methods for Estimation, Testing, and Prediction},
  series    = {Institute of Mathematical Statistics Monographs},
  volume    = {1},
  publisher = {Cambridge University Press},
  year      = {2010},
  doi       = {10.1017/CBO9780511761362}
}

\newpage
\appendix

\section{Proof of Proposition~\ref{prop:identifiable_estimands}}
\label{app:identifiability}

We first restate the proposition from Section~\ref{subsec:identifiability}.

\begin{proposition}[Identifiable standardized latent summaries, restated]
After centering and scaling the learned latent coordinates to unit empirical
variance, the summaries
$\delta_k^{\mathrm{pop}}$, $\delta_k^{\mathrm{het}}$,
$\omega_k^{\mathrm{pop}}$, and $\omega_k^{\mathrm{het}}$
are invariant to the remaining latent sign ambiguity. Consequently, posterior
comparisons such as
\[
    \Pr\!\left(
    \omega_g^{\mathrm{pop}}>\omega_h^{\mathrm{pop}}
    \mid \mathcal D
    \right)
    \quad\text{and}\quad
    \Pr\!\left(
    \omega_g^{\mathrm{het}}>\omega_h^{\mathrm{het}}
    \mid \mathcal D
    \right)
\]
are identifiable, whereas the raw sign of $\beta_k$ is not.
\end{proposition}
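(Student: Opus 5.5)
The plan is to formalize the residual ambiguity as a group action and then check that each summary is constant on orbits. Fix the centering and unit-variance convention. The only remaining diagonal reparametrizations $A=\mathrm{diag}(1,a_1,\ldots,a_K)$ from Section~\ref{subsec:identifiability} that preserve centering and unit empirical variance of every coordinate are those with $a_k\in\{-1,+1\}$. So the residual ambiguity is the group of sign matrices $D=\mathrm{diag}(1,s_1,\ldots,s_K)$ with $s_k\in\{\pm1\}$.

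The first step is to verify this reduction directly. If $z'_{k}=a_k z_{k}$, then $z'_k$ has empirical mean zero automatically. It has empirical variance $a_k^2$, so unit variance forces $a_k^2=1$. For a nonconstant coordinate nothing else is possible.

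The second step is to record the action on parameters. Since $D^{-T}=D$, the map is $\bm\beta\mapsto D\bm\beta$, $\bm\gamma_i\mapsto D\bm\gamma_i$ and $\bm\Sigma\mapsto D\bm\Sigma D$, while $\bm\alpha$ is unchanged. The earlier computation in Section~\ref{subsec:identifiability} shows that the linear predictor, and hence the marginal likelihood $\prod_i m_i$, are invariant under this map. I would note that $p(\bm\gamma_i\mid D\bm\Sigma D)$ is the pushforward of $p(\bm\gamma_i\mid\bm\Sigma)$ under $\bm\gamma_i\mapsto D\bm\gamma_i$, because $|\det D|=1$.

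The third step is the invariance computation itself. Under the action, $|\beta'_k|=|s_k\beta_k|=|\beta_k|$. Also $\Sigma'_{kk}=s_k^2\Sigma_{kk}=\Sigma_{kk}$, so $\delta_k^{\mathrm{het}}$ is unchanged. The shares $\omega_k^{\mathrm{pop}}$ and $\omega_k^{\mathrm{het}}$ are functions of the vectors $(\delta_\ell^{\mathrm{pop}})_\ell$ and $(\delta_\ell^{\mathrm{het}})_\ell$ alone, so they are invariant whenever the denominators are nonzero, and the nonzero condition is itself invariant. Note that the off-diagonal entries $\Sigma_{kl}$ flip sign by $s_ks_l$, which is why they are excluded from the summaries.

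The fourth step turns this pointwise invariance into a statement about posterior probabilities. Let $T_D$ denote the parameter map, and consider the posterior computed under two sign conventions for the representation. Because the likelihood transforms covariantly, the posterior under convention $D\mathbf z$ is the pushforward by $T_D$ of the posterior under convention $\mathbf z$, provided the prior is compatible with the action. Concretely, I need either $p(T_D\Theta)=p(\Theta)$, which holds for priors symmetric in each $\beta_k$ and sign-invariant priors on $\bm\Sigma$ such as inverse-Wishart with a diagonal scale or independent priors on the log-variances, or the weaker requirement that the prior is specified relative to the chosen representation. Then for any event $E$ defined through invariant functions, such as $\{\omega_g^{\mathrm{pop}}>\omega_h^{\mathrm{pop}}\}$, we have $T_D^{-1}(E)=E$. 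Hence $\Pr(E\mid\mathcal D)$ is the same under every convention, so it is identifiable as a functional of the observable data distribution.

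Finally, for non-identifiability of the raw sign, I would exhibit $D$ with $s_k=-1$. This maps the event $\{\beta_k>0\}$ to $\{\beta_k<0\}$ while leaving the data-generating law unchanged. Two equally valid conventions therefore give $\Pr(\beta_k>0\mid\mathcal D)$ and $\Pr(\beta_k<0\mid\mathcal D)$ respectively, and these differ unless the posterior is symmetric.

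The main obstacle is this fourth step: making the prior-compatibility condition explicit and stating precisely what "identifiable" means for a posterior probability. I would handle it by stating the sign-invariance of the prior as a standing assumption and checking that the priors actually used satisfy it.
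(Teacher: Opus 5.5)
Your proposal is correct and follows essentially the same argument as the paper: the marginal likelihood is invariant under the coordinatewise sign flip, $|\beta_k|$ and $\Sigma_{kk}$ (hence the shares and their nonzero-denominator condition) are unchanged, and $\operatorname{sign}(\beta_k)$ reverses. Your step four tightens the paper's argument rather than changing it. The paper passes from pointwise invariance to equality of the posterior probabilities without noting that this needs a sign-symmetric prior, so that the flipped-convention posterior is the pushforward under $T_D$. Your explicit restriction to diagonal $A$ is also more careful than the paper's phrasing, since standardization alone does not exclude non-diagonal mixings (e.g.\ rotations of uncorrelated standardized coordinates) when $K\ge 2$.
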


\begin{proof}
We first establish the general latent-coordinate invariance. Consider the
neural-GLMM linear predictor
\[
    \eta_{ij}
    =
    \mathbf x_{ij}^{\top}\bm\alpha
    +
    \tilde{\mathbf z}_{ij}^{\top}(\bm\beta+\bm\gamma_i),
    \qquad
    \bm\gamma_i\sim N(\mathbf 0,\bm\Sigma),
\]
where
$\tilde{\mathbf z}_{ij}=(1,z_{ij,1},\ldots,z_{ij,K})^\top$
includes the intercept coordinate. Let $A$ be any nonsingular matrix acting on
the latent coordinates, with the intercept coordinate fixed when present.
Define
\[
    \tilde{\mathbf z}'_{ij}=A\tilde{\mathbf z}_{ij},
    \qquad
    \bm\beta'=A^{-T}\bm\beta,
    \qquad
    \bm\gamma'_i=A^{-T}\bm\gamma_i,
    \qquad
    \bm\Sigma'=A^{-T}\bm\Sigma A^{-1}.
\]
Then, for every observation,
\[
    (\tilde{\mathbf z}'_{ij})^{\top}(\bm\beta'+\bm\gamma'_i)
    =
    (A\tilde{\mathbf z}_{ij})^{\top}A^{-T}(\bm\beta+\bm\gamma_i)
    =
    \tilde{\mathbf z}_{ij}^{\top}(\bm\beta+\bm\gamma_i).
\]
Hence the conditional linear predictor, and therefore the conditional
likelihood, is unchanged after the corresponding change of variables
$\bm\gamma'_i=A^{-T}\bm\gamma_i$. Moreover, if
$\bm\gamma_i\sim N(\mathbf 0,\bm\Sigma)$, then
\[
    \bm\gamma'_i
    \sim
    N(\mathbf 0,A^{-T}\bm\Sigma A^{-1})
    =
    N(\mathbf 0,\bm\Sigma').
\]
Therefore,
\[
\int
    p(\mathbf y_i\mid \mathbf X_i,\tilde{\mathbf Z}_i,\bm\gamma_i,
    \bm\alpha,\bm\beta)
    p(\bm\gamma_i\mid\bm\Sigma)
    \,d\bm\gamma_i
=
\int
    p(\mathbf y_i\mid \mathbf X_i,\tilde{\mathbf Z}'_i,\bm\gamma'_i,
    \bm\alpha,\bm\beta')
    p(\bm\gamma'_i\mid\bm\Sigma')
    \,d\bm\gamma'_i.
\]
Thus the marginal likelihood is unchanged under latent-coordinate
transformations of this form.

Now impose the empirical standardization convention used before posterior
sampling. Each learned latent coordinate is centered and scaled to unit
empirical variance. This removes arbitrary shifts and positive rescalings.
The remaining ambiguity is a sign flip. In particular, under the transformation
\[
    z'_{ij,k}=-z_{ij,k},
    \qquad
    \beta'_k=-\beta_k,
    \qquad
    \gamma'_{ik}=-\gamma_{ik},
\]
the linear predictor remains unchanged. The corresponding variance component is
also unchanged:
\[
    \Sigma'_{kk}
    =
    \operatorname{Var}(\gamma'_{ik})
    =
    \operatorname{Var}(-\gamma_{ik})
    =
    \operatorname{Var}(\gamma_{ik})
    =
    \Sigma_{kk}.
\]
Therefore,
\[
    \delta_k^{\mathrm{pop}\,\prime}
    =
    |\beta'_k|
    =
    |-\beta_k|
    =
    |\beta_k|
    =
    \delta_k^{\mathrm{pop}},
\]
and
\[
    \delta_k^{\mathrm{het}\,\prime}
    =
    \sqrt{\Sigma'_{kk}}
    =
    \sqrt{\Sigma_{kk}}
    =
    \delta_k^{\mathrm{het}}.
\]
The same argument applies coordinate-wise to all latent dimensions. Hence the
denominators in
\[
    \omega_k^{\mathrm{pop}}
    =
    \frac{\delta_k^{\mathrm{pop}}}
    {\sum_{\ell=1}^{K}\delta_\ell^{\mathrm{pop}}},
    \qquad
    \omega_k^{\mathrm{het}}
    =
    \frac{\delta_k^{\mathrm{het}}}
    {\sum_{\ell=1}^{K}\delta_\ell^{\mathrm{het}}}
\]
are also unchanged under latent sign flips. It follows that
$\omega_k^{\mathrm{pop}}$ and $\omega_k^{\mathrm{het}}$ are invariant whenever
their denominators are nonzero.

Consequently, posterior comparisons such as
\[
    \Pr\!\left(
    \omega_g^{\mathrm{pop}}>\omega_h^{\mathrm{pop}}
    \mid \mathcal D
    \right)
    \quad\text{and}\quad
    \Pr\!\left(
    \omega_g^{\mathrm{het}}>\omega_h^{\mathrm{het}}
    \mid \mathcal D
    \right)
\]
depend only on sign-invariant summaries and are therefore identifiable. In
contrast, $\operatorname{sign}(\beta_k)$ changes under the same sign-flip
transformation and is not identifiable without an external orientation
convention.
\end{proof}

\section{Stability of Conditional Posterior Inference}
\label{app:conditional_stability}

Let
\[
    m_{\phi,N}(\Theta)
    =
    \frac{1}{N}
    \left[
    -\log p(\Theta)
    -
    \sum_{i=1}^N
    \log
    \int
    p(y_i\mid x_i,z_{\phi,i},\gamma_i,\Theta)
    p(\gamma_i\mid\Sigma)\,d\gamma_i
    \right]
\]
denote the average negative log-posterior potential induced by representation
$\phi$. Let $\phi_\star$ be an oracle or limiting representation and
$\hat\phi$ be the learned representation after Phase 1.

\begin{proposition}[Posterior stability under representation perturbation]
Assume that in a neighborhood $\mathcal N$ of the oracle posterior mode
$\Theta_\star$:
(i) $m_{\phi_\star,N}$ is twice continuously differentiable and
$\nabla^2 m_{\phi_\star,N}(\Theta)\succeq \lambda I$ for some $\lambda>0$;
(ii) the representation perturbation induces bounded score and curvature errors,
\[
    \sup_{\Theta\in\mathcal N}
    \left\|
    \nabla m_{\hat\phi,N}(\Theta)
    -
    \nabla m_{\phi_\star,N}(\Theta)
    \right\|
    \le r_N,
\]
and
\[
    \sup_{\Theta\in\mathcal N}
    \left\|
    \nabla^2 m_{\hat\phi,N}(\Theta)
    -
    \nabla^2 m_{\phi_\star,N}(\Theta)
    \right\|
    \le r_N .
\]
If $r_N<\lambda/2$ and both posterior modes lie in $\mathcal N$, then
\[
    \|\hat\Theta_{\hat\phi}-\hat\Theta_{\phi_\star}\|
    \le
    \frac{2r_N}{\lambda}.
\]
Moreover, under the usual local Gaussian approximation, the conditional
posterior covariance matrices satisfy
\[
    \left\|
    \{N\nabla^2 m_{\hat\phi,N}(\hat\Theta_{\hat\phi})\}^{-1}
    -
    \{N\nabla^2 m_{\phi_\star,N}(\hat\Theta_{\phi_\star})\}^{-1}
    \right\|
    =
    O\left(\frac{r_N}{N}\right).
\]
\end{proposition}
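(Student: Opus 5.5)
Both claims follow from strong convexity on $\mathcal N$ and a first-order perturbation of matrix inverses; no probabilistic input is needed. Write $m_\star=m_{\phi_\star,N}$ and $\hat m=m_{\hat\phi,N}$. Write $\Theta_\star=\hat\Theta_{\phi_\star}$ and $\hat\Theta=\hat\Theta_{\hat\phi}$, and assume (shrinking if needed) that $\mathcal N$ is convex. By Weyl's inequality, (i) and the curvature part of (ii) give $\nabla^2\hat m(\Theta)\succeq(\lambda-r_N)I\succeq(\lambda/2)I$ on $\mathcal N$. So $\hat m$ is $(\lambda/2)$-strongly convex there, and its mode in $\mathcal N$ is the unique stationary point.

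For the mode bound, use the two first-order conditions $\nabla m_\star(\Theta_\star)=0$ and $\nabla\hat m(\hat\Theta)=0$. Integrate along the segment from $\Theta_\star$ to $\hat\Theta$, which lies in $\mathcal N$ by convexity. This gives
\[
\nabla m_\star(\hat\Theta)-\nabla m_\star(\Theta_\star)=H(\hat\Theta-\Theta_\star),\qquad H=\int_0^1\nabla^2 m_\star\bigl(\Theta_\star+t(\hat\Theta-\Theta_\star)\bigr)\,dt\succeq\lambda I .
\]
The left-hand side equals $\nabla m_\star(\hat\Theta)-\nabla\hat m(\hat\Theta)$, whose norm is at most $r_N$ by the score part of (ii). Hence $\lambda\|\hat\Theta-\Theta_\star\|\le r_N$, i.e. $\|\hat\Theta-\Theta_\star\|\le r_N/\lambda\le 2r_N/\lambda$. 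The stated constant $2$ leaves slack; it is what one obtains by linearizing around $\hat m$ instead, using the weaker curvature bound $\lambda/2$.

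For the covariance, set $\hat A=\nabla^2\hat m(\hat\Theta)$ and $A_\star=\nabla^2 m_\star(\Theta_\star)$. Then apply the identity $\hat A^{-1}-A_\star^{-1}=\hat A^{-1}(A_\star-\hat A)A_\star^{-1}$. The inverse norms satisfy $\|\hat A^{-1}\|\le 2/\lambda$ and $\|A_\star^{-1}\|\le1/\lambda$. Split $A_\star-\hat A$ as
\[
\bigl[\nabla^2 m_\star(\Theta_\star)-\nabla^2 m_\star(\hat\Theta)\bigr]+\bigl[\nabla^2 m_\star(\hat\Theta)-\nabla^2\hat m(\hat\Theta)\bigr].
\]
The second bracket is at most $r_N$ by (ii). The first bracket is the main obstacle: mere continuity of $\nabla^2 m_\star$ does not yield an $O(r_N)$ rate. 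I would add, as part of the ``usual local Gaussian approximation'', a local Lipschitz bound $L$ on $\nabla^2 m_\star$ over $\mathcal N$; this holds for logistic GLMMs with smooth priors on compact neighbourhoods. The first bracket is then at most $L\cdot 2r_N/\lambda$ by the mode bound, so
\[
\|\hat A^{-1}-A_\star^{-1}\|\le \frac{2}{\lambda^2}\Bigl(1+\frac{2L}{\lambda}\Bigr)r_N .
\]
Multiplying by the factor $N^{-1}$ from $\{N\,\cdot\,\}^{-1}$ gives the claimed $O(r_N/N)$, with a constant depending only on $\lambda$ and $L$.
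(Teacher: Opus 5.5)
Your proof is correct and follows the paper's outline: stationarity plus strong convexity for the mode, then $A^{-1}-B^{-1}=A^{-1}(B-A)B^{-1}$ for the covariance. It differs from the paper in two useful ways.

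For the mode, the paper linearizes the \emph{perturbed} gradient $\nabla m_{\hat\phi,N}$ around $\hat\Theta_{\phi_\star}$. It therefore first transfers curvature via $\lambda-r_N\ge\lambda/2$ and then bounds the score error at the oracle mode. This is where the constant $2$ and the paper's use of $r_N<\lambda/2$ come from. You linearize the \emph{oracle} gradient and evaluate the score error at $\hat\Theta_{\hat\phi}$ instead. That gives the sharper $r_N/\lambda$, and you need $r_N<\lambda/2$ only for your bound $\|\hat A^{-1}\|\le 2/\lambda$.

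For the covariance, the paper cites only the inverse identity and the curvature bound (ii). However, the two Hessians in the statement are evaluated at different points, while (ii) only compares $\nabla^2 m_{\hat\phi,N}$ and $\nabla^2 m_{\phi_\star,N}$ at a common point. Your split exposes the base-point term $\nabla^2 m_{\phi_\star,N}(\hat\Theta_{\phi_\star})-\nabla^2 m_{\phi_\star,N}(\hat\Theta_{\hat\phi})$, which the paper passes over. You are right that $C^2$ alone makes this term only $o(1)$. A local Lipschitz bound on the oracle Hessian (mild for the logistic-GLMM potential, as you note), combined with the mode bound, is what actually delivers $O(r_N/N)$. Your argument is therefore the more complete one, at the cost of making that extra smoothness assumption explicit.

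One small caveat, shared with the paper: the segment between the two modes must lie in $\mathcal N$. Take $\mathcal N$ convex (e.g.\ a ball) from the outset rather than ``shrinking'' it, since shrinking could exclude one of the modes.
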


\begin{proof}
The first claim follows from strong convexity. Since
$\nabla m_{\hat\phi,N}(\hat\Theta_{\hat\phi})=0$ and
$\nabla m_{\phi_\star,N}(\hat\Theta_{\phi_\star})=0$,
a Taylor expansion of $\nabla m_{\hat\phi,N}$ around
$\hat\Theta_{\phi_\star}$ gives
\[
    \lambda_{\min}\{\nabla^2 m_{\hat\phi,N}(\bar\Theta)\}
    \|\hat\Theta_{\hat\phi}-\hat\Theta_{\phi_\star}\|
    \le
    \|\nabla m_{\hat\phi,N}(\hat\Theta_{\phi_\star})\|.
\]
By assumption,
$\lambda_{\min}\{\nabla^2 m_{\hat\phi,N}(\bar\Theta)\}\ge \lambda-r_N\ge
\lambda/2$, while
$\|\nabla m_{\hat\phi,N}(\hat\Theta_{\phi_\star})\|
=
\|\nabla m_{\hat\phi,N}(\hat\Theta_{\phi_\star})
-\nabla m_{\phi_\star,N}(\hat\Theta_{\phi_\star})\|
\le r_N$.
This proves the mode bound. The covariance statement follows from the matrix
identity
$A^{-1}-B^{-1}=A^{-1}(B-A)B^{-1}$ applied to the two Hessians, together with
the curvature perturbation bound and the fact that posterior covariance scales
as $N^{-1}$ under the local Gaussian approximation.
\end{proof}

\section{Estimation Algorithm and Variance Correction}
\label{app:estimation_algorithm}

\subsection{Post-Hoc Variance Correction}
\label{app:variance_correction}

Let $\boldsymbol{\theta}_k$ denote the unconstrained vector representation of a collected sample $\Theta_k \in \mathcal{S}_{\Theta}$, and let $\boldsymbol{\theta}^{\star}$ and $\Sigma_{\text{SGLD}}$ denote the empirical mean and covariance of these uncorrected samples. We first estimate the total noise covariance $\mathbf{\Gamma}$ injected by the stochastic gradient approximation:
\begin{equation}
\label{eq:gamma}
    \mathbf{\Gamma}(\boldsymbol{\theta}^{\star}) = \frac{\epsilon N^2}{2 S} \hat{\Psi}(\boldsymbol{\theta}^{\star}) + \kappa \mathbf{I},
\end{equation}
where $\hat{\Psi}(\boldsymbol{\theta}^{\star})$ is the covariance of the stochastic gradients and $\kappa \mathbf{I}$ represents the added Langevin diffusion noise. We relate the inflated covariance $\Sigma_{\text{SGLD}}$ to the true posterior precision matrix $A$ in this unconstrained parameterization using the Lyapunov equation:
\begin{equation}
\label{eq:lyapunov}
    A \Sigma_{\text{SGLD}} + \Sigma_{\text{SGLD}} A = 2 \mathbf{\Gamma}(\boldsymbol{\theta}^{\star}).
\end{equation}
We solve this linear equation for $A$ and compute the corrected covariance as $\Sigma_{\text{corr}} = A^{-1}$. To rescale the posterior samples, define the Cholesky decompositions $\Sigma_{\text{SGLD}} = \mathbf{E}^{\top}\mathbf{E}$ and $A = \mathbf{F}^{\top}\mathbf{F}$.
The corrected unconstrained samples are then given by
\begin{equation}
\label{eq:cholesky_correction}
    \boldsymbol{\theta}_k^{\text{corr}}
    =
    \mathbf{G}(\boldsymbol{\theta}_k - \boldsymbol{\theta}^{\star})
    + \boldsymbol{\theta}^{\star},
    \qquad
    \mathbf{G} = (\mathbf{E}^{\top}\mathbf{F})^{-1}.
\end{equation}
Finally, we map $\{\boldsymbol{\theta}_k^{\text{corr}}\}$ back to the original parameter space to obtain $\mathcal{S}_{\Theta}^{\text{corr}}$.

\subsection{Two-Stage SGLD Algorithm}
\label{app:two_stage_algorithm}

\begin{algorithm}[t]
\caption{Two-stage optimization for representation learning and SGLD with variance correction}
\label{alg:two_stage_sgld}
\begin{algorithmic}[1]
\State \textbf{Input:} Dataset $\mathcal{D}$, Burn-in $K_{\text{burn}}$, Total $K_{\text{total}}$, step sizes $\epsilon_t$ (sampling step $\epsilon$), correction constant $\kappa$.
\State \textbf{Initialize:} $\bm{\phi}$, $\Theta = \{\bm{\alpha}, \bm{\beta}, \bm{\Sigma}\}$, $\{\bm{\gamma}_i\}$, $\mathcal{S}_{\Theta} \leftarrow \emptyset$.

\For{epoch $k = 1$ \textbf{to} $K_{\text{total}}$}
    \State \textbf{Check Phase:} If $k > K_{\text{burn}}$, freeze $\bm{\phi}$ and fix $(\mu_z, \sigma_z)$.

    \For{mini-batch $\mathcal{B}_t \subset \mathcal{D}$ of size $S$}
        \State \textit{1) Feature Extraction:}
        \State $\mathbf{z}_i \leftarrow [1, \text{Standardize}(h_{\bm{\phi}}(\mathbf{u}_i))]$ for $i \in \mathcal{B}_t$.

        \State \textit{2) Random Effects (MH Step):}
        \State Sample $\bm{\gamma}_{ir} \sim p(\bm{\gamma}_i | \mathbf{y}_i, \mathbf{z}_i, \mathbf{x}_i, \Theta)$ (Inner Loop).

        \State \textit{3) Gradient Estimation:}
        \State Compute unbiased $\hat{g}_i(\Theta)$ via Fisher's Identity Eq.~\eqref{eq:fisher_estimator}.

        \State \textit{4) Parameter Updates:}
        \If{$k \leq K_{\text{burn}}$} \Comment{\textbf{Phase 1: NN Learning}}
            \State $\bm{\phi} \leftarrow \text{Adam}(\nabla_{\bm{\phi}} \mathcal{L}_{\text{joint}})$.
        \EndIf

        \State \textbf{SGLD Step:}
        \State $\Delta \Theta \leftarrow \nabla f_0(\Theta) + \frac{N}{S} \sum_{i \in \mathcal{B}_t} \hat{g}_i(\Theta)$
        \State $\Theta \leftarrow \Theta - \epsilon_t \Delta \Theta + \mathcal{N}(0, 2\epsilon_t \mathbf{I})$

        \If{$k > K_{\text{burn}}$} \Comment{\textbf{Phase 2: Sampling}}
            \State $\mathcal{S}_{\Theta} \leftarrow \mathcal{S}_{\Theta} \cup \{ \Theta \}$.
        \EndIf
    \EndFor
\EndFor
\Statex \textbf{Post-hoc Variance Correction:}
\State Compute posterior mean of unconstrained samples in $\mathcal{S}_{\Theta}$, named $\boldsymbol{\theta}^{\star}$.
\State Compute uncorrected posterior covariance, $\Sigma_{\text{SGLD}}$.
\State Compute injected-noise covariance, $\mathbf{\Gamma}(\boldsymbol{\theta}^{\star})$, using Eq.~\eqref{eq:gamma}.
\State Solve the Lyapunov equation in Eq.~\eqref{eq:lyapunov} for posterior precision $A$.
\State Compute corrected posterior covariance, $\Sigma_{\text{corr}} \leftarrow A^{-1}$.
\State Obtain corrected samples, $\mathcal{S}_{\Theta}^{\text{corr}}$, using Eq.~\eqref{eq:cholesky_correction}.
\end{algorithmic}
\end{algorithm}

\section{Simulation Details}
\label{app:simulation_details}

\subsection{LMM Simulation Details}
\label{app:simulation_lmm_details}

For the LMM simulation, $N=1000$ subjects are generated with $n_i=10$ observations each. The random effects are drawn from $\gamma_i \sim \mathcal{N}(0,\sigma_\gamma^2)$ and the measurement errors from $\epsilon_{ij} \sim \mathcal{N}(0,\sigma^2)$. The covariates $\mathbf{x}_{ij}$ and $x_{2,ij}$ are sampled from standardized normal distributions. The true parameters are $\beta_0=1.5$, $\beta_1=-0.5$, $\beta_2=0.6$, $\sigma^2=0.5$, and $\sigma_\gamma^2=1.5$.

The latent function $z(\cdot)$ represents the nonlinear structure to be learned by the neural encoder $h_{\phi}(\cdot)$. We consider three latent structures:
\begin{align}
    \text{Complex 1D:}\quad
    z(x)
    &=
    0.07x^3 - 0.2x^2
    + 1.5\sin(2x)e^{-0.3x^2}
    + 0.8 \mathcal{I}(x>0)\log(1+x)
    - 0.8 \mathcal{I}(x\le0)\log(1+|x|),\\
    \text{Paraboloid:}\quad
    z(x_1,x_2)
    &=
    x_1^2 + x_2^2,\\
    \text{Saddle:}\quad
    z(x_1,x_2)
    &=
    x_1^2 - x_2^2.
\end{align}

The neural network $h_{\phi}$ is implemented as a Multi-Layer Perceptron (MLP) with two hidden layers of 64 units each and GELU activation functions. We follow the two-stage training strategy described in Algorithm~\ref{alg:two_stage_sgld} in Appendix~\ref{app:two_stage_algorithm}. To account for the noise introduced by mini-batch subsampling ($S=50$) and the correlation between SGLD samples, we apply the post-hoc variance correction described in Appendix~\ref{app:variance_correction}.

We validate posterior summaries against a Gibbs sampler benchmark that yields exact inference sufficient to constitute ground truth for this simulation. This benchmark does not scale to larger datasets and requires fixing the latent features rather than learning them. Therefore, we implement the benchmark on the full dataset assuming the true form of $z$ is known.

\subsection{LMM Posterior Validation}
\label{app:simulation_lmm_posterior}

\begin{figure}[t]
    \centering
    \includegraphics[width=\linewidth]{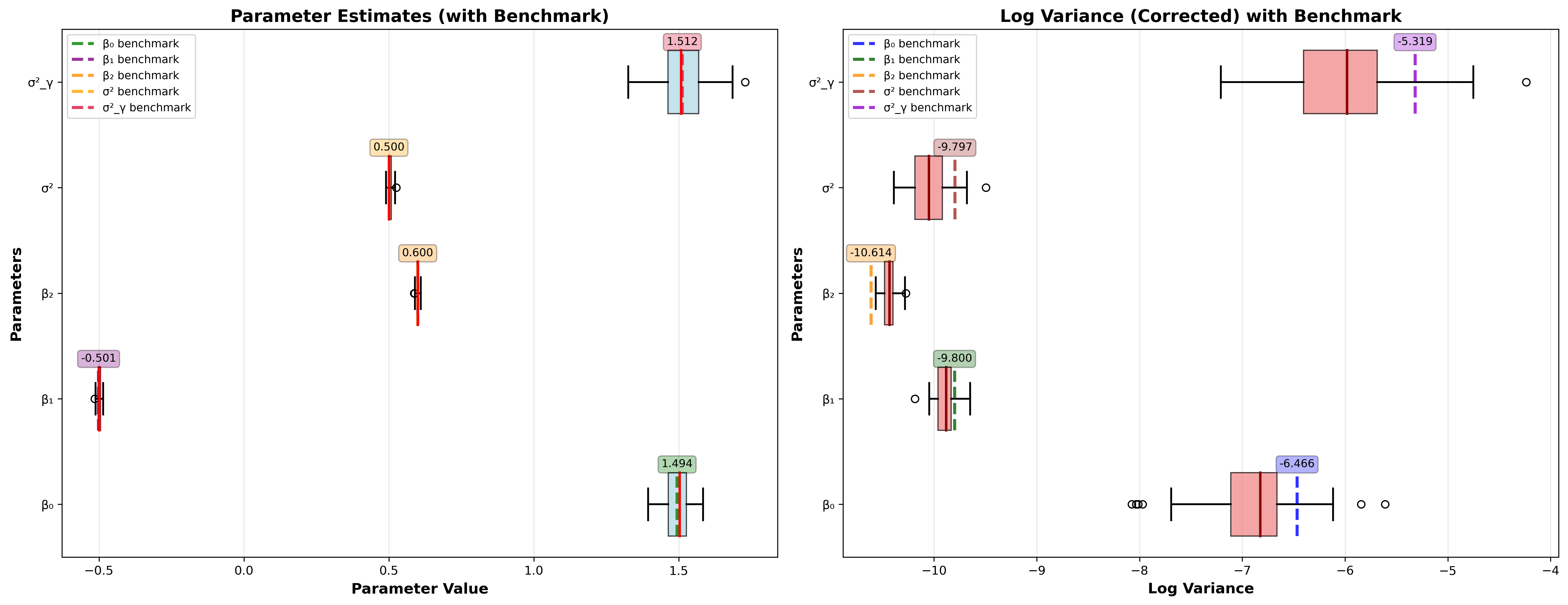}
    \caption{\textbf{Posterior validation.} Left: Posterior estimates of model parameters compared to ground truth benchmarks. Right: Comparison of posterior log-variances after applying variance correction against the Gibbs sampler benchmark. The corrected variances align with true posterior uncertainty.}
    \label{fig:LMM_posterior_comparison}
\end{figure}

\subsection{GLMM Simulation Details}
\label{app:simulation_glmm_details}

For the GLMM simulation, the input features satisfy $\mathbf{x}_{ij}\in\mathbb{R}^{20}$, but the true latent variable depends nonlinearly only on the first three dimensions:
\begin{equation}
    z(\mathbf{x}_{ij})
    =
    0.5x_{ij,1}^3
    - 2x_{ij,2}^2
    - 2x_{ij,3}.
\end{equation}
The remaining 17 dimensions are noise features. The binary response follows the logistic mixed model in the main text, with $\gamma_{0i}\sim\mathcal{N}(0,\sigma_{\gamma0}^2)$ denoting the random intercept and $\gamma_{1i}\sim\mathcal{N}(0,\sigma_{\gamma1}^2)$ denoting the random slope associated with the latent feature. The true parameters are $\beta_0=0.5$, $\beta_1=-1.2$, and $\beta_2=0.8$, with variance components $\sigma_{\gamma0}^2=1.0$ and $\sigma_{\gamma1}^2=0.8$.

The neural network $h_\phi$ maps the 20-dimensional input to a scalar latent variable using two hidden layers of 64 units with GELU activations. We apply the same two-stage training protocol as in the LMM setting. The variance correction is applied to the full set of five statistical parameters:
\[
    \boldsymbol{\theta}
    =
    \{
    \beta_0,\beta_1,\beta_2,
    \log\sigma_{\gamma0}^2,
    \log\sigma_{\gamma1}^2
    \}.
\]
We use an MCMC benchmark based on the ground-truth latent variables to evaluate posterior uncertainty.

\section{Hyperparameter Tuning}
\label{app:hyperparameter-tuning}

We tuned hyperparameters separately for the simulation and real-data experiments because the selection target is different in the two settings. In simulations, a Gibbs sampler was available as a reference, so we selected hyperparameters by comparing SGLD posterior summaries with the Gibbs posterior, focusing on posterior means, posterior variances, and representation quality metrics such as feature correlation. The most important tuned quantities were the SGLD step-size scale, the Monte Carlo budget for the random-effects sampler, the epoch at which posterior sampling begins, and the neural-encoder learning rate and weight decay.

We considered three training strategies for the neural mixed-model component. The first strategy, \emph{iterative learning}, is the main training procedure used in our primary experiments. In this setting, the neural encoder and the mixed-model parameters are updated jointly during the representation-learning phase, so that the learned latent features are repeatedly adapted to the current GLMM/LMM posterior estimates. After the sampling-start epoch, the encoder is fixed and SGLD is used to collect posterior samples for the statistical parameters. The second strategy, \emph{lr-decay}, keeps the same coupled training structure but linearly decays the neural learning rate to zero by a prespecified epoch, providing a smoother transition from representation learning to posterior sampling. The third strategy, \emph{non-iterative learning}, separates representation learning from posterior inference: the encoder is trained before posterior sampling begins, and the learned representation is then treated as fixed when running SGLD for the mixed-model parameters.

For all simulation experiments, stage 1 used 5 runs per configuration, stage 2 reran the top configurations with 15 runs, and the rank-1 configuration was rerun with 50 random seeds to estimate posterior means and posterior variances.

For the real-data experiments, no exact posterior reference was available. We therefore used five-fold validation performance to choose hyperparameters, ranking configurations primarily by validation AUC. After selecting the top-ranked configuration, we refit the model with the selected setting for the final posterior analysis. The real-data search focused on the neural learning rate, SGLD step size, neural weight decay, Metropolis-Hastings budget, proposal scale, sampling-start epoch, and whether the neural learning rate was kept constant or linearly decayed to zero.

All experiments were run on an internal SLURM-managed compute cluster.  The reported simulation runs used CPU workers only; no GPU was requested.  Individual simulation jobs were submitted with 4--8 CPU cores, 4--8GB memory, and a 48-hour wall-time limit, depending on the experiment.  The final real-data runs were also CPU jobs: Glaucoma and ABCD used 12 CPU cores and 16GB memory per run.

\begin{table}[t]
\centering
\caption{Hyperparameter grids for the simulation experiments. Entries list the candidate set used in tuning. When a two-stage search was used, stage 1 denotes the broad screening grid and stage 2 denotes reruns of the top-ranked configurations.}
\label{tab:sim_hyperparameter_tuning}
\small
\setlength{\tabcolsep}{2pt}
\renewcommand{\arraystretch}{1.18}
\resizebox{\textwidth}{!}{%
\begin{tabular}{p{2.6cm}p{2.35cm}p{2.35cm}p{2.35cm}p{2.35cm}p{2.35cm}p{2.35cm}}
\toprule
Model
& \multicolumn{3}{c}{LMM}
& \multicolumn{3}{c}{GLMM} \\
\cmidrule(lr){2-4}\cmidrule(lr){5-7}
Learning strategy
& Iterative
& LR decay
& Non-iterative
& Iterative
& LR decay
& Non-iterative \\
\midrule
Number of epochs
& Tuning: $\{160\}$; final: $2000$
& Tuning: $\{160\}$; final: $2000$
& Tuning: $\{160\}$; final: $2000$
& Tuning: $\{160\}$; final: $800$
& Tuning: $\{160\}$; final: $800$
& Tuning: $\{160\}$; final: $800$ \\
Sampling-start epoch
& $\{70,90,110\}$
& $\{70,90,110\}$
& $\{70,90,110\}$
& $\{90,110,130\}$
& $\{90,110,130\}$
& $\{90,110,130\}$ \\
Neural decay / stop epoch
& \textsc{NA}
& $\{50,70,90\}$
& \textsc{NA}
& \textsc{NA}
& $\{50,70,90\}$
& \textsc{NA} \\
Training MH total steps
& $\{1000,2000,3000\}$
& $\{1000,2000,3000\}$
& $\{1000,2000,3000\}$
& $\{1000,2000,3000\}$
& $\{1000,2000,3000\}$
& $\{1000,2000,3000\}$ \\
Training MH burn-in / kept samples
& $20\%/80\%$ of total steps
& $20\%/80\%$ of total steps
& $20\%/80\%$ of total steps
& $20\%/80\%$ of total steps
& $20\%/80\%$ of total steps
& $20\%/80\%$ of total steps \\
Training MH proposal sd
& $\{0.8,1.5,1.8\}$
& $\{0.8,1.5,1.8\}$
& $\{0.8,1.5,1.8\}$
& $\{0.8,1.5,1.8\}$
& $\{0.8,1.5,1.8\}$
& $\{0.8,1.5,1.8\}$\\
Variance-correction MH total steps
& $\{1000,2000,3000\}$
& $\{1000,2000,3000\}$
& $\{1000,2000,3000\}$
& $\{1000,2000,3000\}$
& $\{1000,2000,3000\}$
& $\{1000,2000,3000\}$ \\
Variance-correction MH proposal sd
& $\{0.8,1.5,1.8\}$
& $\{0.8,1.5,1.8\}$
& $\{0.8,1.5,1.8\}$
& $\{0.8,1.5,1.8\}$
& $\{0.8,1.5,1.8\}$
& $\{0.8,1.5,1.8\}$\\
SGLD step-size scale $\delta$
& $\{0.7,0.8,0.9,1.0\}$
& $\{0.7,0.8,0.9,1.0\}$
& $\{0.7,0.8,0.9,1.0\}$
& $\{0.7,0.8,0.9,1.0\}$
& $\{0.7,0.8,0.9,1.0\}$
& $\{0.7,0.8,0.9,1.0\}$ \\
Neural learning rate
& $\{5\times10^{-4},10^{-3}\}$
& $\{10^{-3}\}$
& $\{10^{-3}\}$
& $\{3\times10^{-4},10^{-3},3\times10^{-3}\}$ 
& $\{3\times10^{-4},10^{-3},3\times10^{-3}\}$ 
& $\{3\times10^{-4},10^{-3},3\times10^{-3}\}$ \\
Neural weight decay
& $\{0\}$
& $\{0\}$
& $\{0\}$
& $\{0,0.05,0.2,0.3\}$
& $\{0,0.05,0.2,0.3\}$
& $\{0,0.05,0.2,0.3\}$ \\
Neural updates per batch
& $\{2\}$
& $\{2\}$
& \textsc{NA}
& $\{1,2\}$
& $\{1,2\}$
& \textsc{NA} \\
\bottomrule
\end{tabular}}
\end{table}

\begin{table}[t]
\centering
\caption{Hyperparameter grids for the real-data experiments. Both real-data analyses used the neural GLMM model. The selected configuration is the rank-1 setting from five-fold cross-validation.}
\label{tab:real_hyperparameter_tuning}
\small
\setlength{\tabcolsep}{5pt}
\renewcommand{\arraystretch}{1.15}
\begin{tabular}{p{4.2cm}p{5.2cm}p{5.2cm}}
\toprule
Hyperparameter & Glaucoma & ABCD \\
\midrule
Selection criterion
& Five-fold validation AUC
& Five-fold validation AUC \\
Number of CV folds
& $5$
& $5$ \\
CV epochs / final epochs
& $125 / 1000$
& $125 / 2000$ \\
Neural learning rate
& $\{10^{-3},3\times10^{-3},3\times10^{-4}\}$
& $\{10^{-3},5\times10^{-3},3\times10^{-4}\}$ \\
SGLD step size $\epsilon$
& $\{10^{-4},3\times10^{-4},3\times10^{-5}\}$
& $\{3\times10^{-4},3\times10^{-5}\}$ \\
Neural weight decay
& $\{10^{-3},3\times10^{-3},10^{-2}\}$
& $\{10^{-3},5\times10^{-3},1.5\times10^{-2},5\times10^{-2}\}$ \\
MH total steps
& $\{1000,2000\}$
& $\{1000,2000\}$ \\
MH burn-in / kept samples
& $20\%/80\%$ of total steps
& $20\%/80\%$ of total steps \\
MH proposal sd
& $\{0.8,1.8\}$
& $\{0.8,1.8\}$ \\
Sampling-start epoch
& $\{40,70,100\}$
& $\{40,70,100\}$ \\
Neural learning-rate schedule
& $\{\texttt{constant},\texttt{lr\_decay}\}$
& $\{\texttt{constant},\texttt{lr\_decay}\}$ \\
Neural decay-end epoch
& $\{40,75\}$ for \texttt{lr\_decay}; \textsc{NA} for \texttt{constant}
& $\{25,50,75\}$ for \texttt{lr\_decay}; \textsc{NA} for \texttt{constant} \\
Minibatch size
& $50$ subjects
& $100$ subjects \\
\bottomrule
\end{tabular}
\end{table}

\section{Real-Data Cohort Construction and Preprocessing}
\label{app:real_data_preprocessing}

\subsection{Glaucoma Dataset and Preprocessing}
\label{app:glaucoma_data}

We analyzed longitudinal standard automated perimetry (SAP) data from a XX cohort study. The dataset comprises Humphrey 24-2 SAP tests (52 test locations) with functional loss quantified by Total Deviation (TD) values. To ensure compatibility with the neural encoder, non-rectangular visual fields were zero-padded to a $12 \times 12$ grid and normalized. Strict quality control excluded tests with artifacts or low reliability indices ($>33\%$ fixation losses or $>15\%$ false positives), resulting in 29,161 usable fields from 3,832 eyes (mean follow-up: 4.95 years, 7.61 visits).

The binary outcome $y_{ij}$, representing ``progression,'' was derived via Ordinary Least Squares (OLS) regression of the global visual field index over time. Progression was defined as a statistically significant negative slope at $\alpha=0.10$.

We implemented a 5-fold cross-validation strategy with a hybrid subject-temporal split to mimic clinical forecasting: 70\% of subjects were assigned to training, while the longitudinal visits of the remaining 30\% were split temporally (first 40\% history, middle 30\% validation, final 30\% future forecasting).

\subsection{ABCD Study Cohort, Preprocessing, and Predictor Grouping}
\label{app:abcd_grouping}

This subsection describes the cohort construction, preprocessing, and predictor grouping used in the ABCD Study analysis. The grouping was designed to preserve clinically interpretable domains while reflecting data availability after leakage control and missingness filtering.

We utilized data from the ABCD Study, a large-scale longitudinal cohort of 11,880 children recruited across the United States \citep{karcher2021abcd}. The dataset includes baseline assessments (ages 9--10) and annual follow-up visits collecting psychosocial, demographic, environmental, and neurobiological data \citep{garavan2018recruiting}. General psychopathology was characterized using the ``p-factor,'' a dimensional score derived from the Child Behavior Checklist (CBCL) syndrome scales \citep{hill2025prediction}. In our analytic sample, after requiring a non-missing next-visit p-factor quartile, 26,787 observations from 7,447 participants were available.

The binary outcome $y_{ij}$ was defined as ``High Risk'' status, corresponding to the top quartile of the p-factor score at the subsequent visit. Consistent with the glaucoma analysis, we employed a hybrid subject-temporal split: 70\% of subjects were assigned to training, while 30\% were split temporally into history, validation, and future forecasting subsets.

\paragraph{Leakage control and preprocessing}
To avoid circular prediction, all variables beginning with \texttt{CBCL\_} were removed from the predictor set before grouping. We also removed direct outcome and leakage columns, including current or next p-factor quartile variables, subject identifiers, and interview dates. Auxiliary columns such as \texttt{eventname}, \texttt{interview\_age}, \texttt{interview\_year}, and split indicators were assigned to an excluded ``Others'' category.

For each training split, variables with more than 25\% missingness in the training data were excluded. Remaining variables were median-imputed and standardized using training-set statistics, with the same transformations applied to validation and held-out test observations. The final retained dimensions were 20 screen-time variables, 26 sleep variables, 32 family-environment variables, 18 social/school variables, 7 neighborhood/socioeconomic variables, 33 neurobiology variables, 8 family-psychiatric-history variables, 4 fixed biological/demographic variables, and one linear trauma/adversity summary variable.

\paragraph{Clinical modifiability principle}
Variables were organized by approximate clinical modifiability. The first four domains represent potentially modifiable behavioral and proximal environmental factors. Neighborhood and socioeconomic variables were treated as somewhat modifiable because they may be influenced by family, school, health-system, or policy interventions but are less directly modifiable at the individual clinical level. Neurobiology and trauma/adversity history were treated as minimally modifiable in the short term. Family psychiatric history and fixed biological/demographic characteristics were treated as not modifiable.

\begin{table}[t]
\centering
\small
\caption{Final ABCD predictor grouping used in the real-data analysis.}
\label{tab:abcd_grouping}
\begin{tabularx}{\linewidth}{P{0.24\linewidth}P{0.17\linewidth}P{0.34\linewidth}P{0.12\linewidth}P{0.07\linewidth}}
\toprule
\textbf{Final domain} & \textbf{Modifiability} & \textbf{Assignment rule / key variables} & \textbf{Model role} & \textbf{\# Vars} \\
\midrule
Screen Time \& Technology Use
& Highly modifiable
& Prefixes such as \texttt{screen*}, \texttt{screentime*}, and \texttt{stq\_*}
& MLP encoder
& 20 \\

Sleep
& Highly modifiable
& Variables beginning with \texttt{sleepdisturb*} or containing \texttt{sleep}
& MLP encoder
& 26 \\

Family Environment
& Modifiable
& Prefixes such as \texttt{fam\_enviro*}, \texttt{fes\_*}, \texttt{parent\_monitor*}, and \texttt{parent\_rules*}
& MLP encoder
& 32 \\

Social \& School Environment
& Modifiable
& Prefixes such as \texttt{school\_*} and \texttt{prosocial\_*}
& MLP encoder
& 18 \\

Neighborhood \& Socioeconomic Context
& Somewhat modifiable
& Neighborhood and built-environment variables such as \texttt{neighborhood*} and \texttt{adi\_*}; crime, violence, safety, and deprivation variables; household income and parent education variables
& MLP encoder
& 7 \\

Neurobiology
& Minimally modifiable
& Neuroimaging and neurobiological prefixes including \texttt{rsfmri\_*}, \texttt{tfmri\_*}, \texttt{dmdtifp*}, \texttt{dmri*}, \texttt{smri*}, \texttt{mrisdp*}, \texttt{fmri\_*}, and \texttt{mri\_*}
& MLP encoder
& 33 \\

Family Psychiatric History
& Not modifiable
& Prefixes such as \texttt{fam\_history\_*}, \texttt{famhx\_*}, and \texttt{family\_mental\_health*}
& MLP encoder
& 8 \\

Fixed Biological/Demographic Characteristics
& Not modifiable
& Race/ethnicity, sex at birth, gender, and related demographic prefixes such as \texttt{demo\_sex\_*}, \texttt{sex\_*}, \texttt{race\_*}, and \texttt{ethnicity\_*}
& MLP encoder
& 4 \\

Trauma \& Adversity Summary
& Minimally modifiable
& \texttt{total\_core}; retained as the only trauma/adversity summary after preprocessing
& Linear covariate
& 1 \\

Others
& Excluded
& Auxiliary variables, split indicators, interview metadata, and variables not used in the GLMM-NN predictor set
& Excluded
& 8 \\
\bottomrule
\end{tabularx}
\end{table}

\paragraph{Modeling implications}
The final model uses neural encoders only for the eight multi-variable domains. The trauma/adversity summary \texttt{total\_core} is modeled as a direct standardized linear covariate because it is a single retained summary variable rather than a high-dimensional predictor block. The neighborhood/built-environment and socioeconomic categories are merged into a single \textit{Neighborhood \& Socioeconomic Context} encoder to improve statistical power and avoid fitting separate neural encoders to very small predictor groups.

\section{Additional ABCD Results and Interpretation}
\label{app:abcd_additional_results}

This appendix provides the full domain-level interpretation for the ABCD Study analysis. Because all neural encoder outputs and the \texttt{total\_core} covariate are standardized, coefficient magnitudes are directly comparable across domains. The signs should be interpreted as effects of learned latent features rather than direct monotone effects of every raw variable in a group; for example, the positive coefficient for the family-environment encoder indicates that larger values of the learned family-environment summary are associated with higher risk.

\paragraph{Population-level domain ranking}
As described in the main text, the strongest population-level association was observed for \textit{Sleep} ($\hat{\beta}_{\mathrm{Sleep}} \approx 0.95$, 95\% CI: $[0.91, 0.99]$). The next largest population-level associations were \textit{Social \& School Environment} ($\hat{\beta}_{\mathrm{SocialSchool}} \approx 0.57$, 95\% CI: $[0.53, 0.60]$) and \textit{Family Environment} ($\hat{\beta}_{\mathrm{FamilyEnv}} \approx 0.51$, 95\% CI: $[0.47, 0.55]$). Although these non-sleep domains are not emphasized in the main text, their ranking suggests that a substantial portion of predictive signal is concentrated in modifiable or proximal developmental contexts. Prior systematic reviews similarly identify modifiable parental factors, including lower warmth, conflict, over-involvement, and aversive parenting, as correlates of youth depression and anxiety \citep{yap2014parental}, and school connectedness and teacher support as prospective markers of adolescent emotional health \citep{kidger2012school}.

Moderate positive associations were found for \textit{Family Psychiatric History} ($\hat{\beta}_{\mathrm{FamPsych}} \approx 0.25$, 95\% CI: $[0.21, 0.29]$), the \texttt{total\_core} summary ($\hat{\beta}_{c} \approx 0.18$, 95\% CI: $[0.14, 0.22]$), and \textit{Screen Time \& Technology Use} ($\hat{\beta}_{\mathrm{Screen}} \approx 0.13$, 95\% CI: $[0.09, 0.17]$). Two domains showed small negative associations, \textit{Neurobiology} ($\hat{\beta}_{\mathrm{Neuro}} \approx -0.15$, 95\% CI: $[-0.19, -0.11]$) and \textit{Neighborhood \& Socioeconomic Context} ($\hat{\beta}_{\mathrm{NSES}} \approx -0.14$, 95\% CI: $[-0.18, -0.10]$), while \textit{Fixed Biological/Demographic Characteristics} had the smallest population-level effect ($\hat{\beta}_{\mathrm{BioDemo}} \approx 0.08$, 95\% CI: $[0.04, 0.12]$). The small, near-null neurobiology effect is consistent with \citet{hill2025prediction}, who found that adding neuroimaging did not improve predictive performance beyond psychosocial predictors. The socioeconomic-context finding should not be interpreted as absence of relevance; extensive evidence links lower socioeconomic status to higher child and adolescent mental health burden \citep{reiss2013socioeconomic}, but in our model this domain contributes more strongly through subject-specific heterogeneity than through a single average population-level slope.

\paragraph{Subject-specific heterogeneity}
The full ABCD posterior distributions are shown in Figure~\ref{fig:abcd_all_posteriors} in Appendix~\ref{app:abcd_all_posteriors}; these include the random-intercept and domain-specific random-slope variance components. In the ABCD Study, the largest variance component was the random slope for \textit{Neighborhood \& Socioeconomic Context} ($\hat{\sigma}^2_{\mathrm{NSES}} \approx 0.06$, 95\% CI: $[0.05, 0.07]$), which exceeded even the random intercept ($\hat{\sigma}_0^2 \approx 0.04$, 95\% CI: $[0.03, 0.04]$); the random intercept still reflects residual between-subject baseline risk not fully explained by the observed domains. The remaining domain-specific random-slope variances were smaller and comparable to one another (all $\hat{\sigma}^2 \lesssim 0.04$), indicating that, apart from neighborhood context, between-subject heterogeneity in domain sensitivity was relatively modest. This pattern highlights a separation between two forms of importance: modifiable domains such as \textit{Sleep}, \textit{Family Environment}, and \textit{Social \& School Environment} carry strong average population-level signals, whereas \textit{Neighborhood \& Socioeconomic Context} contributes disproportionately through subject-specific heterogeneity rather than through its population-level slope.

\section{Posterior of all parameters in glaucoma study}
\label{app:glaucoma_all_posteriors}
\begin{figure}
    \centering
    \includegraphics[width=\linewidth]{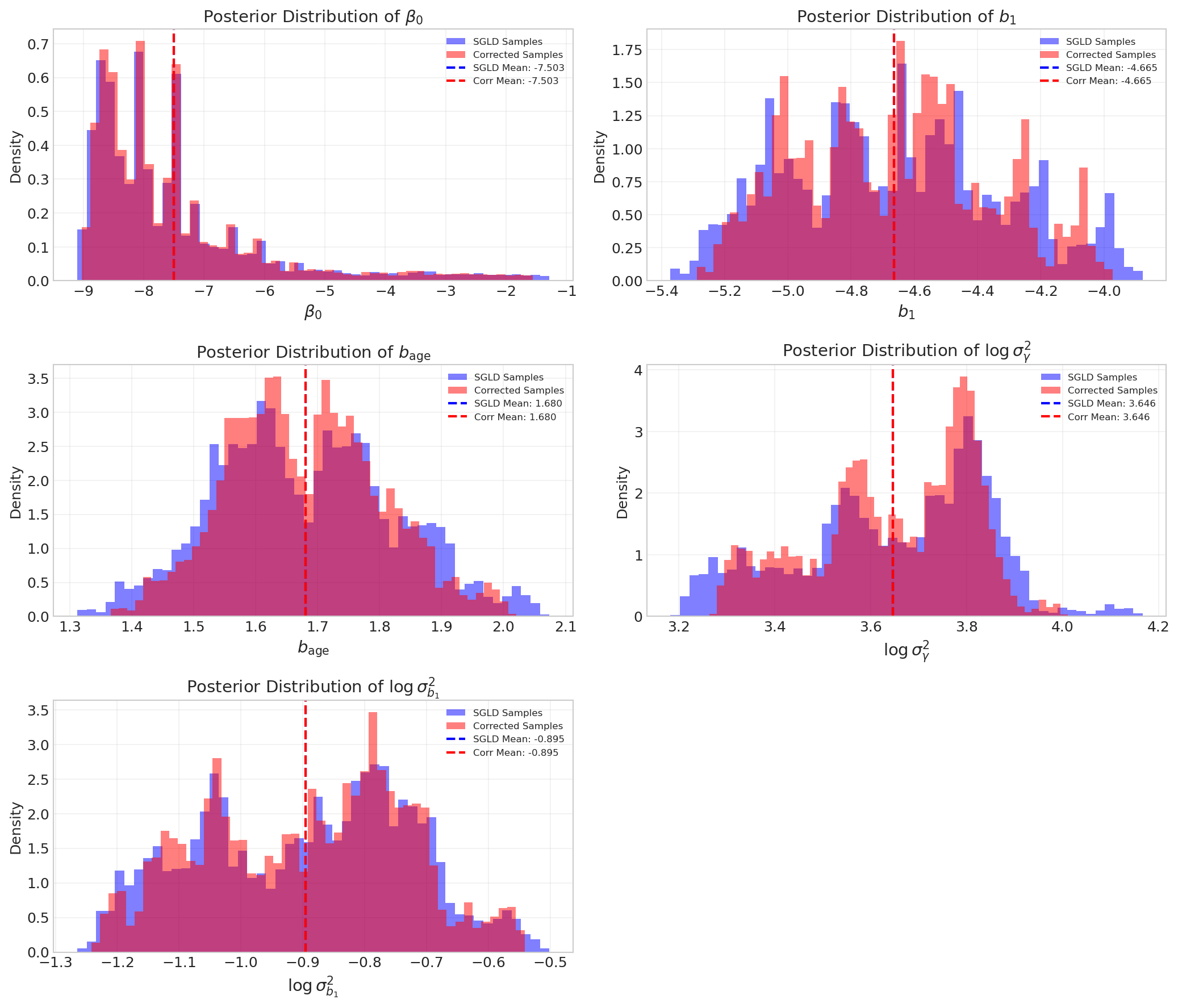}
    \caption{Posterior distribution of all parameters}
    \label{fig:glaucoma_all_posteriors}
\end{figure}

\section{Posterior of all parameters in ABCD study}
\label{app:abcd_all_posteriors}
\begin{figure}
    \centering
    \includegraphics[width=0.85\linewidth]{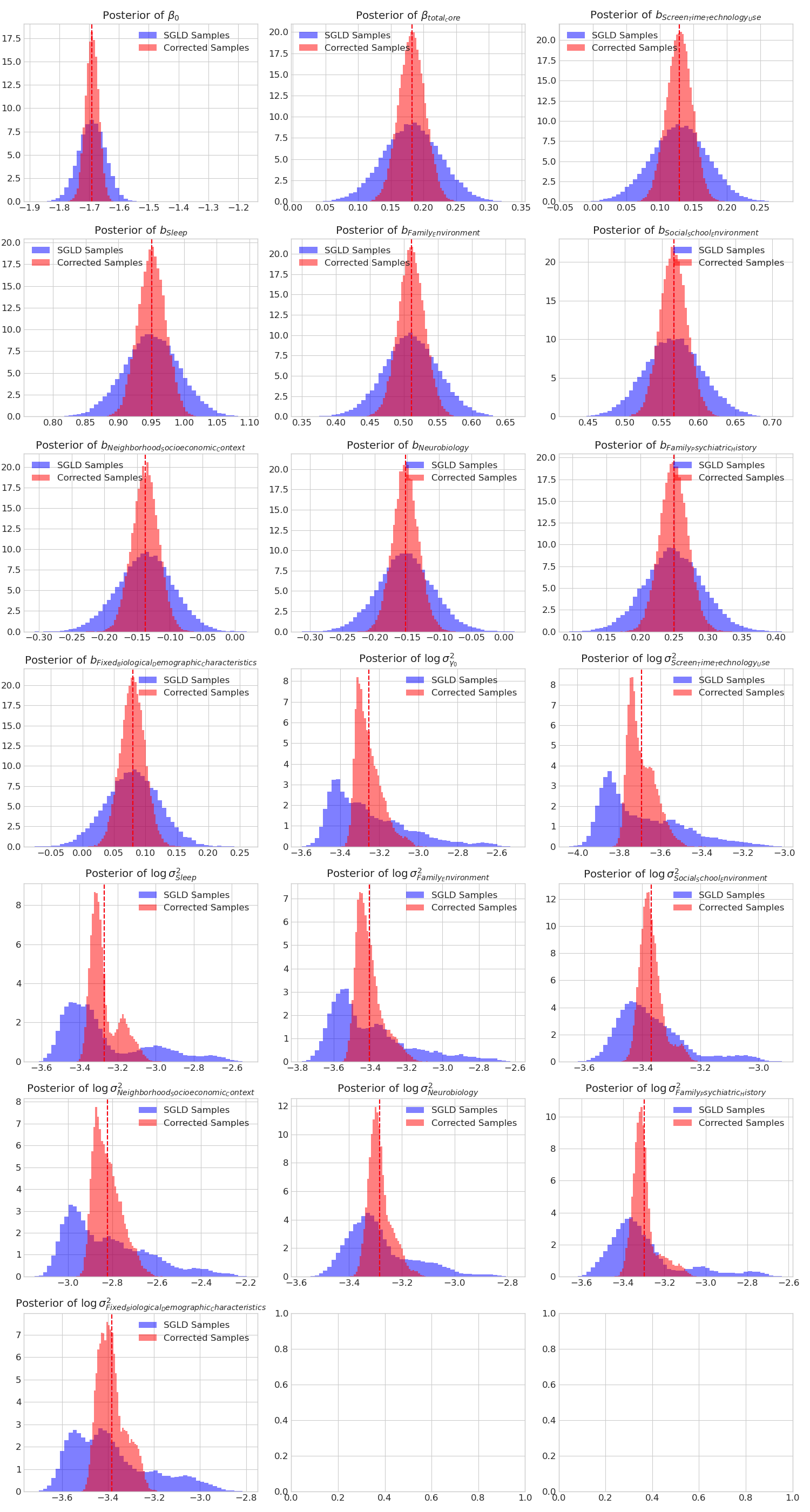}
    \caption{Posterior distribution of all parameters}
    \label{fig:abcd_all_posteriors}
\end{figure}

\clearpage


\end{document}